\documentclass[11pt]{article}

\usepackage[margin=1in]{geometry}
\usepackage[utf8]{inputenc}
\usepackage[T1]{fontenc}
\usepackage[numbers,sort&compress]{natbib}
\usepackage{hyperref}
\usepackage{url}
\usepackage{booktabs}
\usepackage{amsfonts}
\usepackage{nicefrac}
\usepackage{microtype}
\usepackage{xcolor}
\usepackage{amsmath,amssymb,bm}
\usepackage{graphicx}
\usepackage{multirow}
\usepackage{subcaption}

\usepackage{amsthm}

\newtheorem{theorem}{Theorem}[section]
\newtheorem{lemma}[theorem]{Lemma}
\newtheorem{corollary}[theorem]{Corollary}

\newtheorem{assumption}[theorem]{Assumption}

\theoremstyle{definition}
\newtheorem{definition}[theorem]{Definition}

\theoremstyle{remark}
\newtheorem{remark}[theorem]{Remark}

\title{Twist Flow for Inverse Problems}
\author{Shiqin Zeng \quad Zijun Deng \quad Felix J. Herrmann\\[0.5em]\small Georgia Institute of Technology}
\date{}

\begin{document}

\maketitle

\begin{abstract}


In Bayesian inverse problems, posterior sampling requires generating samples
that are consistent with given observations while capturing the range of
plausible solutions. Direct conditional generative models introduce latent
noise to model this ambiguity, but paired inverse-problem training can still
encourage an almost deterministic map from the observation to the target. As a
result, generated samples may be observation-consistent while
under-representing posterior variability, especially when the posterior is
multimodal, leading to undercoverage, mode distortion, or artificial
transitions between distinct feasible solutions. We propose \emph{joint twist-flow}, an augmented flow-matching formulation that
learns a continuous transport from the augmented source state
\((z_x,y)\) to the augmented terminal state \((x,z_y)\). Here \(x\) is the
target variable, \(y\) is the observation, \(z_x\) is the Gaussian reference
coordinate for posterior sampling, and \(z_y\) is a Gaussian likelihood-side
coordinate associated with the observation branch. Under a Gaussian observation
model, \(z_y\) is motivated by the normalized observation residual associated
with observation compatibility. Its role is not to replace uncertainty in
\(x\), but to couple generated samples of \(x\) to observation consistency,
helping reduce likelihood-inconsistent variation while preserving variability
in weakly constrained directions. We validate the method on low-dimensional inverse problems with reference
posterior samples, where joint twist-flow better preserves multimodal posterior
support than a direct conditional-flow baseline. We further evaluate the method
on image restoration and seismic subsurface velocity-model inversion, showing
increased posterior variability while maintaining observation consistency.
\end{abstract}
\section{Introduction}
\label{sec:intro}

Generative models have become a central tool for learning and sampling from
high-dimensional distributions, with major developments including generative
adversarial networks (GANs) \cite{goodfellow2014generative}, variational
autoencoders (VAEs) \cite{kingma2013auto}, normalizing flows
\cite{dinh2016density,kingma2018glow}, and diffusion and score-based generative
models \cite{ho2020denoising,song2020score}. In inverse problems, however,
the goal is not to sample from the marginal data distribution alone. Given an
observation $y$, one seeks samples of an unknown target $x$ from the posterior
distribution $p(x\mid y)$ \cite{stuart2010inverse}. This task is difficult
because the observation operator is often many-to-one, so a fixed observation
can be compatible with multiple plausible targets. Observation consistency is
necessary but not sufficient. A useful posterior sampler should also represent
the uncertainty that remains unresolved by the observation.

Conditional generative models address this ambiguity by introducing latent
variables. A common parameterization learns a direct conditional map from
Gaussian latent noise and the observation to the target. In principle, this
allows the model to generate different feasible solutions for the same
observation. However, in paired inverse-problem training, each observation
$y_i$ is often matched with only one target $x_i$. The training objective may
therefore be satisfied by relying mostly on the observation and using the latent
noise only weakly. As a result, the sampler may remain consistent with the
observation but produce too little posterior variation. When several distinct
solutions are compatible with the same observation, the sampler may miss some
solutions, distort others, or create unrealistic transitions between them.

We introduce \emph{joint twist-flow} as shown in Figure~\ref{fig:framework}, an augmented flow-matching formulation for
posterior sampling in inverse problems. Rather than learning a direct map from
latent noise and the observation to $x$, \emph{joint twist-flow} learns a
continuous augmented transport
\[
    S_\theta(z_x,y) = (x,z_y),
\]
where $z_x\sim\mathcal N(0,I)$ drives posterior sampling and $z_y$ is an
auxiliary terminal coordinate associated with the observation. This added
coordinate gives the transport an observation-side degree of freedom, reducing
the tendency to fit paired training data through a nearly deterministic
dependence of $x$ on $y$. For a fixed observation $y$, the learned flow maps
independent draws of $z_x\sim\mathcal N(0,I)$ to an ensemble of target samples
$x$. The variability across this ensemble represents the posterior uncertainty
left unresolved by the observation.


\begin{figure*}[t]
    \centering
    \includegraphics[width=\textwidth]{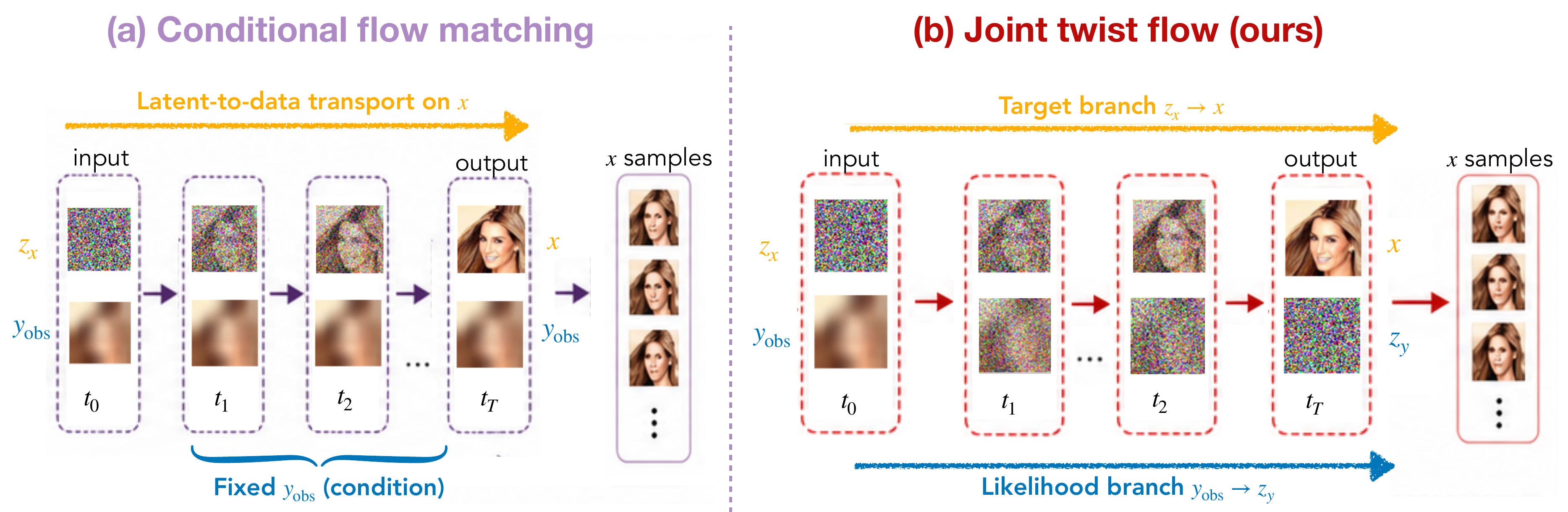}
    \caption{
    Overview of conditional flow matching and joint
    twist-flow.
    (a) Conditional flow matching samples $x$ from a one-way conditional map
    with fixed observation $y_{\mathrm{obs}}$, which may underrepresent
    posterior ambiguity.
    (b) Joint twist-flow learns an augmented transport $(z_x,y_{\mathrm{obs}})\mapsto(x,z_y)$, where $x$ is the posterior sample
    and $z_y$ is a Gaussian likelihood-side coordinate for the observation
    branch. This augmented transport is designed to preserve broader posterior
    variability under observation consistency.
    }
    \label{fig:framework}
\end{figure*}

This paper makes three contributions. First, we propose \emph{joint twist-flow},
an augmented parameterization $(z_x,y)\mapsto(x,z_y)$ for fixed-observation
posterior sampling. Second, we develop a flow-matching objective on the
augmented state and interpret $z_y$ as a likelihood-side coordinate associated
with observation consistency. Third, we evaluate whether \emph{joint twist-flow}
preserves posterior variation while maintaining observation consistency.
Low-dimensional experiments with reference posterior samples validate our
method by showing that \emph{joint twist-flow} reduces latent underuse and
better preserves posterior support. We then evaluate the same formulation on
high-dimensional image restoration and seismic velocity-model inversion,
assessing whether generated samples remain consistent with the fixed observation
while capturing a range of plausible solutions.
\section{Related work}
\label{sec:related_work}
\paragraph{Learned-prior methods for inverse problems}Generative methods for inverse problems combine learned priors or learned
generative dynamics with observation consistency. Plug-and-play (PnP) methods use learned
denoisers as implicit priors within iterative reconstruction algorithms
\cite{venkatakrishnan2013plug}, and regularization-by-denoising (RED)
uses denoisers to define explicit regularizers for inverse problems
\cite{romano2017little}. Diffusion and score-based inverse methods use pretrained
generative dynamics to guide reconstruction from observed data. Denoising
Diffusion Restoration Models (DDRM) restore images from degraded measurements
using denoising diffusion priors \cite{kawar2022denoising}. Diffusion Posterior
Sampling (DPS) combines diffusion sampling with observation consistency for
posterior-guided reconstruction \cite{chung2022diffusion}, while score-based inverse
solvers couple learned score dynamics with the forward observation operator at
inference time \cite{song2021solving}. These methods demonstrate the
strength of learned priors for observation-consistent reconstruction, often by
combining a learned prior with data-consistency or observation-guidance updates
at inference time. In contrast, \emph{joint twist-flow} learns an amortized posterior sampler as an
augmented flow-based transport for fixed-observation inference.
\paragraph{Flow-based posterior sampler parameterizations} Normalizing flows parameterize invertible maps between simple reference
distributions and complex data distributions, enabling reversible sampling and
likelihood-based training through change of variables
\cite{dinh2016density,kingma2018glow}. Inverse problems have also been addressed
with invertible neural networks, which use reversible architectures to represent
ambiguous inverse mappings and posterior uncertainty \cite{ardizzone2018analyzing}.
Continuous normalizing flows extend normalizing flows to continuous time by
representing the transformation from latent variables to data variables as the
solution of an ODE \cite{chen2018neural,grathwohl2018ffjord}. Flow matching
and rectified flow provide velocity-field objectives for such continuous-time
generative dynamics by learning vector fields that transport samples between
reference and data distributions along prescribed probability paths
\cite{lipman2022flow,liu2022flow}. Stochastic interpolants provide
a broader framework connecting flow and diffusion constructions through bridge
processes and associated velocity fields \cite{albergo2025stochastic}. Our
method builds on this continuous-time transport viewpoint, but applies it to the
augmented inverse-problem state $(z_x,y)\mapsto(x,z_y)$ rather than to a direct
conditional map from latent noise to the unknown variable alone.

\paragraph{Randomized least squares and transport for Bayesian inverse problems}
Classical Bayesian inverse problems often involve a Gaussian likelihood, in
which the normalized observation residual has standard Gaussian scale.
Optimization-based sampling methods exploit this structure. Weak-constraint
formulations introduce auxiliary variables to relax hard reduced formulations
and make posterior representations easier to characterize numerically
\cite{fang2018uncertainty}. Randomize-then-optimize (RTO) methods connect posterior
sampling with maps from Gaussian reference variables to optimization-based
posterior proposals \cite{bardsley2020scalable}. Prior-transformation methods
emphasize the usefulness of standardized Gaussian coordinates before applying
transport-based sampling schemes \cite{wang2017bayesian}. \emph{Joint twist-flow} is
closest in spirit to this residual-coordinate view: $z_x$ acts as a reference
coordinate for the unknown branch, while $z_y$ acts as a likelihood-side
residual coordinate paired with the observation branch. Unlike RTO-style
methods, however, \emph{joint twist-flow} learns this relation by flow matching instead
of solving a new randomized optimization problem for every sample.

\paragraph{Joint conditional generation}
Recent work has emphasized that posterior and likelihood conditionals can be
viewed as different conditionals of a shared joint distribution over unknown
variables and observations. All-in-one simulation-based inference methods learn
models capable of sampling multiple conditionals of this joint distribution
\cite{gloeckler2024all}, and related invertible generative models support
forward and inverse problems within one shared transport
\cite{van2025invertible}. Our work adopts this
shared-joint view of conditional generation, but realizes it through a
continuous-time flow-matching transport on the augmented state
$(z_x,y)\mapsto(x,z_y)$. Here $z_x$ drives posterior sampling under the fixed
observation, while $z_y$ is the terminal coordinate paired with the observation
branch. Rather than serving as additional observed information or replacing
uncertainty in $x$, $z_y$ provides a likelihood-side residual coordinate that
makes the transport full-dimensional and keeps generated targets coupled to
their conditioning observations.

\section{Method}
\label{sec:method}

\newcommand{\E}{\mathbb{E}}
\newcommand{\R}{\mathbb{R}}

\begin{figure}[t]
    \centering
    \includegraphics[width=\columnwidth]{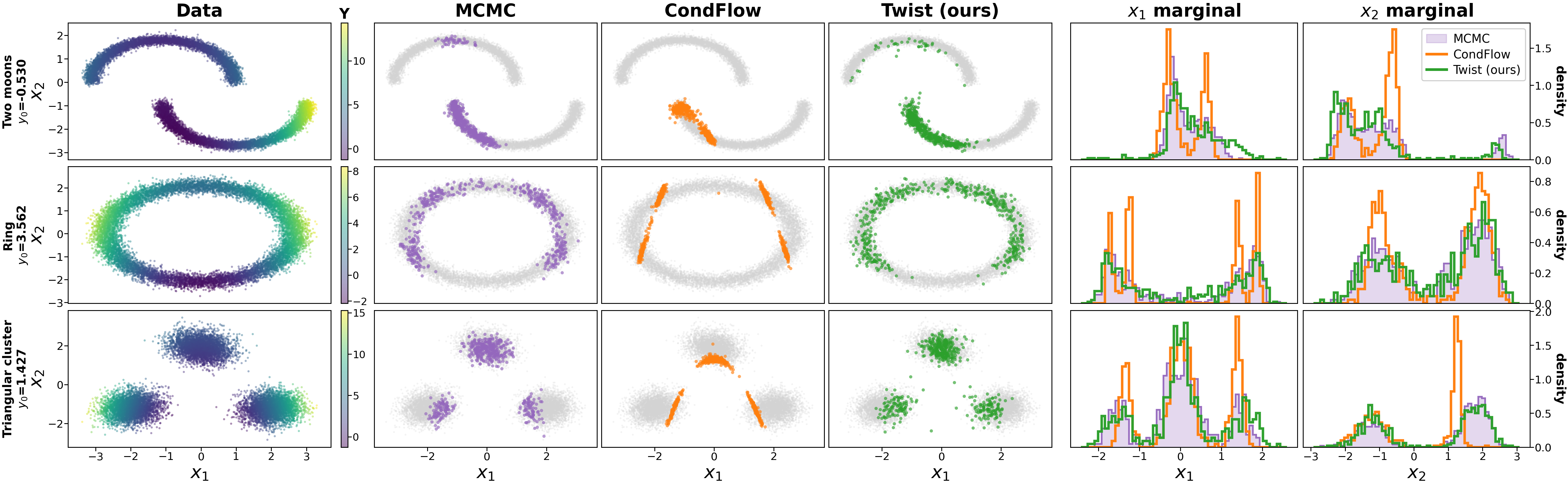}
    \caption{
    Toy posterior comparison for three inverse problems. Each row corresponds to
    one structured prior and one fixed observation $y_0$. The first column shows
    samples from the prior in $x$-space, with color indicating the scalar
    observation value $y$ generated by the forward model. The next three columns
    compare the MCMC posterior reference, conditional-flow samples, and joint
    twist-flow samples for the fixed $y_0$; gray points indicate the prior
    support for context. The last two columns compare the one-dimensional
    posterior marginals of $x_1$ and $x_2$.
    }
    \label{fig:toy_example}
\end{figure}
\vspace{-0.1cm}

\subsection{Problem setup}
We consider paired data $(x,y)\sim p_{\mathrm{data}}(x,y)$, where
$x\in\R^{d_x}$ is the unknown target variable and $y\in\R^{d_y}$ is the
observation. The pairs are generated by a physical or prescribed forward
relation
\begin{equation}
    y=\mathcal A(x)+\varepsilon,
\label{eq:forward-data}
\end{equation}
with the noiseless case corresponding to $\varepsilon=0$. The paired data
provide samples of the forward target--observation relation, but the inverse
map is generally ambiguous: a fixed observation $y$ may correspond to multiple
plausible targets $x$.

Posterior sampling therefore requires a conditional generator that can preserve
variability in $x$ while remaining compatible with the fixed observation. Under
a Gaussian observation model, this compatibility is measured by the whitened
residual
\begin{equation}
    r_y(x,y)
    =
    \Gamma_y^{-1/2}\bigl(y-\mathcal A(x)\bigr),
\label{eq:whitened-residual}
\end{equation}
which motivates representing the observation-side relation on a Gaussian scale.

\subsection{Joint twist-flow}

The setup above creates a parameterization challenge. A direct conditional flow
samples from an $x$-valued map
\begin{equation}
    x=G_\theta(z_x;y),
    \qquad
    z_x\sim\mathcal N(0,I_{d_x}),
\label{eq:direct-flow}
\end{equation}
where $y$ is used only as a conditioning input. Since the terminal state
contains only $x$, the same output branch must represent both posterior
variability and observation compatibility. In paired inverse-problem training,
this one-sided parameterization can favor a narrow reconstruction and
underrepresent the conditional variability of $x$.

Joint twist-flow addresses this by making the observation part of the
transported state. We learn an augmented transport
\begin{equation}
    S_\theta:(z_x,y)\mapsto(x,z_y),
    \qquad
    z_x\sim\mathcal N(0,I_{d_x}),
    \quad
    z_y\sim\mathcal N(0,\sigma_y^2I_{d_y}).
\label{eq:twist-map}
\end{equation}
The source state contains a Gaussian sampling coordinate and the fixed
observation, while the terminal state contains the generated target and a
Gaussian likelihood-side coordinate.

We introduce $z_y$ as a prescribed Gaussian terminal coordinate for the
observation branch. Motivated by the residual view of observation compatibility,
$z_y$ is sampled independently from $\mathcal N(0,\sigma_y^2I_{d_y})$ during
training and paired with the target variable $x$ at the terminal state. This
gives the likelihood-side branch a controlled Gaussian scale, while the learned
transport couples the observation branch with posterior sampling. Thus, we define $    s_0=(z_x,y),
    s_1=(x,z_y),$ and set $\pi_t(\cdot\mid s_0,s_1)$ be a probability path connecting $s_0$ and
$s_1$, and let $u_t(s_t\mid s_0,s_1)$ denote the corresponding target velocity.
We train a joint velocity field $u_\theta(s_t,t)$ by the regression objective
\begin{equation}
\mathcal L_{\mathrm{twist}}(\theta)
=
\E_{\substack{
(x,y)\sim p_{\mathrm{data}}\\
z_x\sim\mathcal N(0,I_{d_x}),\;
z_y\sim\mathcal N(0,\sigma_y^2I_{d_y})\\
t\sim\mathcal U[0,1],\;
s_t\sim \pi_t(\cdot\mid s_0,s_1)
}}
\left[
\left\|
u_\theta(s_t,t)-u_t(s_t\mid s_0,s_1)
\right\|_2^2
\right].
\label{eq:twist-fm-loss}
\end{equation}

After training, the transport can be written as
\begin{equation}
    S_\theta(z_x,y)
    =
    \bigl(
    x_\theta(z_x;y),\,
    z_{y,\theta}(z_x;y)
    \bigr).
\label{eq:trained-transport}
\end{equation}
For a fixed observation $y$, $x_\theta(z_x;y)$ is the generated posterior
sample, while $z_{y,\theta}(z_x;y)$ is the associated learned likelihood-side
coordinate. Posterior sampling uses the $x$-marginal; $z_{y,\theta}$ is
generated jointly as part of the augmented transport.

The parameter $\sigma_y$ controls the prescribed Gaussian scale of the
likelihood-side terminal coordinate. For confidence level $q\in(0,1)$, define
\begin{equation}
    \mathcal B_q(\sigma_y)
    =
    \left\{
    z\in\R^{d_y}:
    \|z\|_2^2
    \leq
    \sigma_y^2\chi^2_{d_y}(q)
    \right\},
\label{eq:likelihood-side-ball}
\end{equation}
where $\chi^2_{d_y}(q)$ is the $q$-quantile of a chi-square distribution.
This ball gives an interpretable Gaussian scale for the likelihood-side
coordinate generated by the learned transport. It expresses the likelihood-aware
role of the augmented coordinate: generated samples may explore variability in
$x$, while their jointly generated likelihood-side coordinate remains organized
around the prescribed Gaussian scale.


For posterior sampling, the pretrained map \(S_\theta(\cdot,y)\) induces a joint terminal distribution
\(p_\theta(x,z_y\mid y)\). Equivalently, the posterior sampler is the
\(x\)-marginal of this augmented distribution,
\begin{equation}
    p_\theta(x\mid y)
    =
    \int p_\theta(x,z_y\mid y)\,dz_y .
\label{eq:x-marginal}
\end{equation}

Thus, the likelihood-side coordinate is amortized through the learned transport:
during training, the prescribed Gaussian variable \(z_y\) defines the terminal
coordinate paired with \(x\), while posterior sampling requires only \(z_x\)
and the fixed observation \(y\). The conditional sampler is learned from paired
target--observation data. When these pairs are sampled from a Bayesian joint
model \(p(x)p(y\mid x)\), the data conditional corresponds to the posterior
\(p(x\mid y)\). In this sense, joint twist-flow learns a posterior sampler
through the \(x\)-marginal of the augmented transport. The intended effect of
the likelihood-side coordinate is to improve the conditional transport
parameterization so that the \(x\)-marginal can retain posterior variability
while remaining tied to the learned target--observation relation; the
experiments test this through posterior-reference comparisons and
observation-space consistency diagnostics.
\section{Experiments}
\vspace{-0.1cm}
\label{sec:experiments}
\begin{table}[t]
\centering
\small
\caption{
Quantitative comparison against the MCMC posterior reference on the toy inverse
problems. The coefficients $(c_1,c_2)$ used in the nonlinear observation model
are listed below each setting.
}
\label{tab:toy_mcmc_quantitative}
\resizebox{\columnwidth}{!}{
\begin{tabular}{llcccc}
\toprule
Setting & Method
& $W_1(x_1)$
& $W_1(x_2)$
& KS$(x_1)$
& KS$(x_2)$ \\
\midrule
\begin{tabular}[c]{@{}l@{}}
Two moons\\[-0.5mm]
\scriptsize $c_1=1.0,\ c_2=0.5$
\end{tabular}
& Conditional Flow
& $0.282 \pm 0.246$
& $0.663 \pm 0.327$
& $0.269 \pm 0.127$
& $0.311 \pm 0.094$ \\
& Joint Twist Flow (ours)
& $\mathbf{0.132 \pm 0.030}$
& $\mathbf{0.151 \pm 0.072}$
& $\mathbf{0.121 \pm 0.014}$
& $\mathbf{0.074 \pm 0.016}$ \\
\midrule
\begin{tabular}[c]{@{}l@{}}
Ring\\[-0.5mm]
\scriptsize $c_1=1.2,\ c_2=0.8$
\end{tabular}
& Conditional Flow
& $0.181 \pm 0.019$
& $0.188 \pm 0.041$
& $0.133 \pm 0.022$
& $0.161 \pm 0.085$ \\
& Joint Twist Flow (ours)
& $\mathbf{0.098 \pm 0.019}$
& $\mathbf{0.133 \pm 0.012}$
& $\mathbf{0.056 \pm 0.007}$
& $\mathbf{0.101 \pm 0.025}$ \\
\midrule
\begin{tabular}[c]{@{}l@{}}
Triangular cluster\\[-0.5mm]
\scriptsize $c_1=1.5,\ c_2=1.1$
\end{tabular}
& Conditional Flow
& $0.220 \pm 0.104$
& $0.271 \pm 0.238$
& $0.147 \pm 0.051$
& $0.231 \pm 0.219$ \\
& Joint Twist Flow (ours)
& $\mathbf{0.142 \pm 0.020}$
& $\mathbf{0.129 \pm 0.072}$
& $\mathbf{0.091 \pm 0.025}$
& $\mathbf{0.059 \pm 0.015}$ \\
\bottomrule
\end{tabular}
}
\end{table}


\subsection{Toy inverse problems: posterior support preservation}
\label{sec:toy_validation}
\begin{figure*}[!htbp]
    \centering
    \begin{subfigure}[t]{\textwidth}
        \centering
        \includegraphics[width=0.87\linewidth]{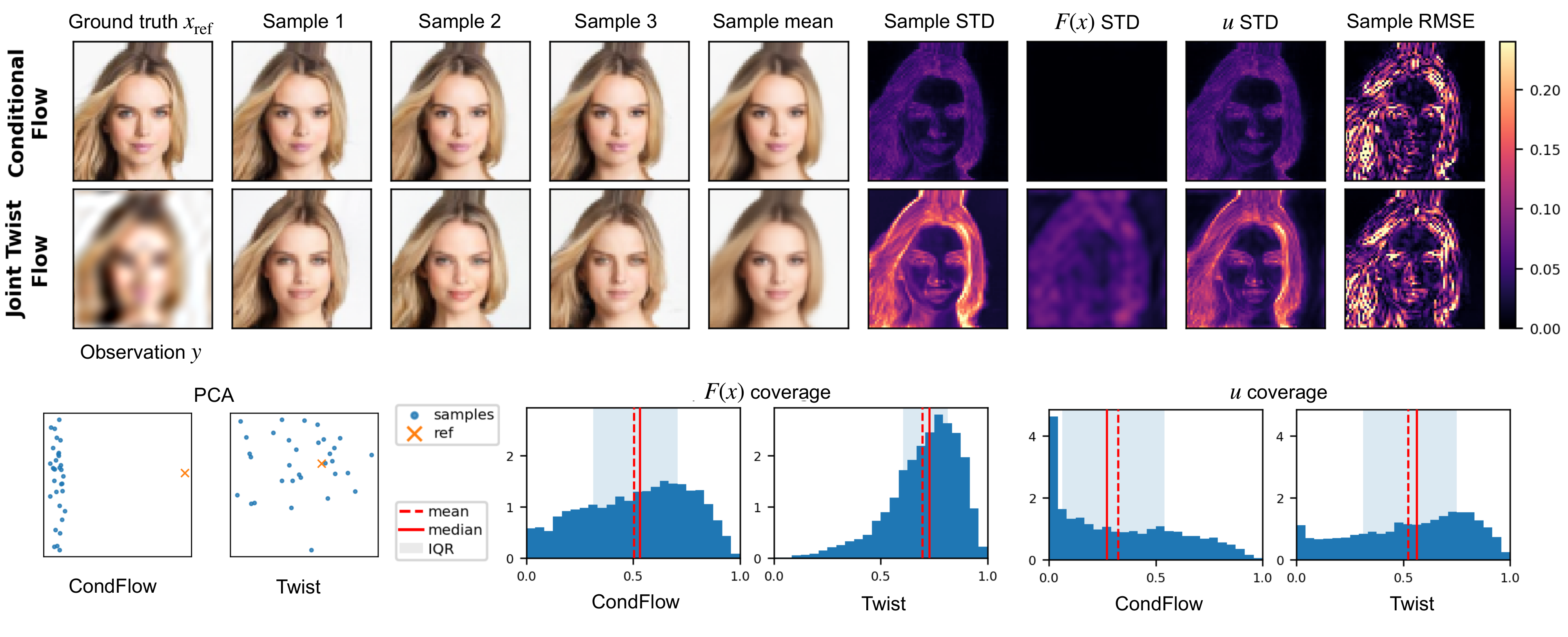}
        \caption{Low-frequency image restoration under a Fourier low-pass observation. Joint twist-flow produces a broader sample ensemble than CondFlow, with variability primarily in unresolved high-frequency components while the observed low-frequency structure remains stable.}
        \label{fig:lf}
    \end{subfigure}
    \hfill
    \begin{subfigure}[t]{\textwidth}
        \centering
        \includegraphics[width=0.87\linewidth]{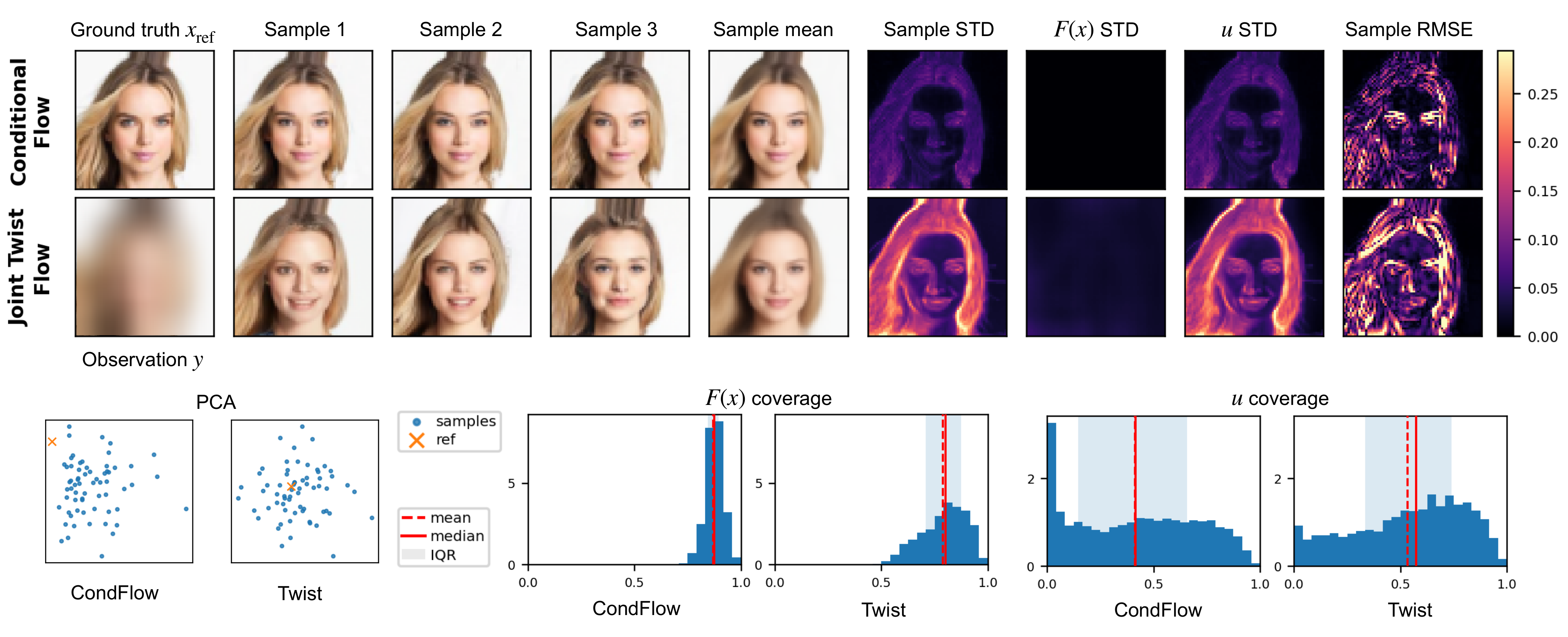}
        \caption{Blur-downsampling image restoration. Joint twist-flow generates more diverse high-resolution reconstructions than CondFlow, with uncertainty concentrated in fine-scale details not determined by the low-resolution observation.}
        \label{fig:downup}
    \end{subfigure}
    \hfill
    \begin{subfigure}[t]{\textwidth}
        \centering
        \includegraphics[width=0.87\linewidth]{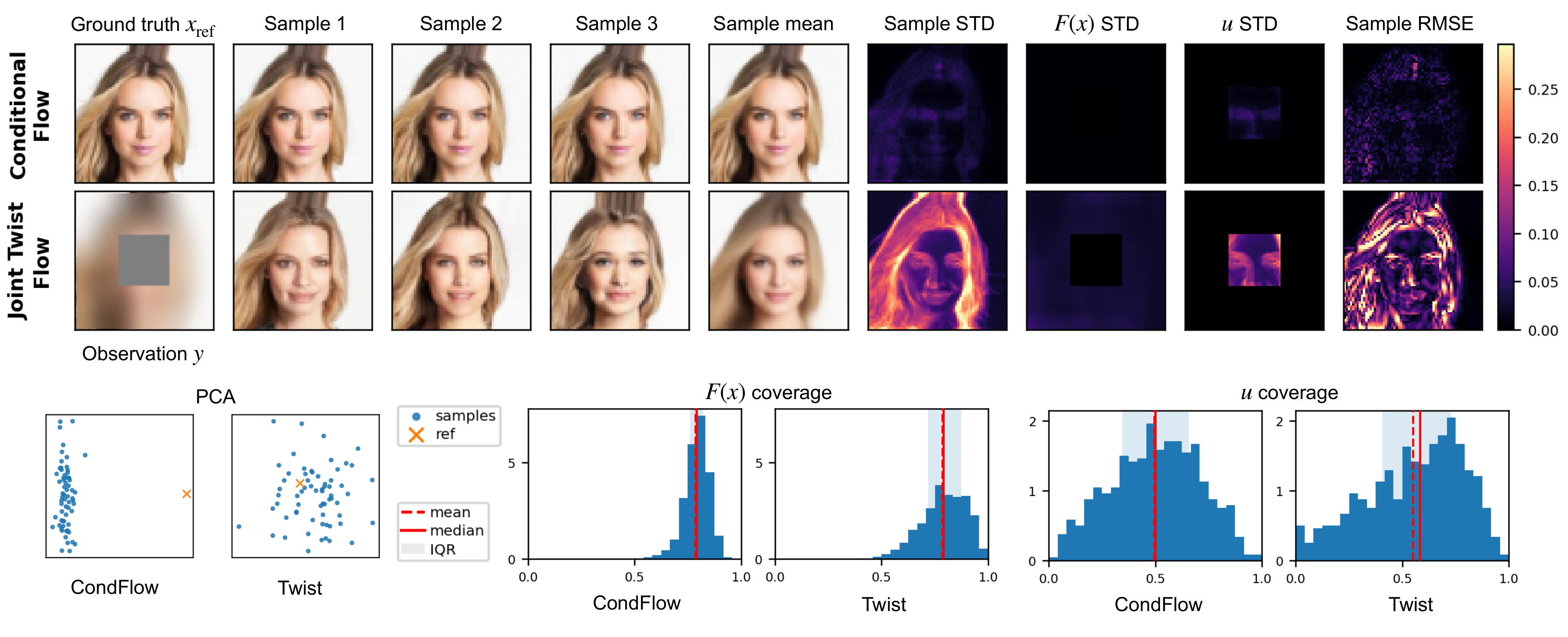}
        \caption{Blur-masking image restoration. Joint twist-flow concentrates posterior variability in the missing or weakly constrained region while keeping the observed region consistent with the fixed observation.}
        \label{fig:mask}
    \end{subfigure}
    \caption{Image restoration under three observation operators.}
    \label{fig:image_restoration}
\end{figure*}

We first evaluate joint twist-flow on controlled two-dimensional inverse
problems with reference posterior samples. The toy problems are designed to
expose a specific failure mode of direct conditional sampling: samples may be
associated with the correct observation while covering only part of the
posterior support or placing mass in low-density regions between feasible
solutions.

We use three structured priors over $x=(x_1,x_2)\in\mathbb{R}^2$:
\texttt{two\_moons}, \texttt{ring}, and \texttt{triangular cluster}. These priors create
curved, closed, and separated supports. For each prior, observations are
generated by 
\[
    y = c_1 x_1^2 + c_2 x_2 + \varepsilon,
    \qquad
    \varepsilon \sim \mathcal N(0,\sigma^2).
\]
The coefficients $(c_1,c_2)$ used for each setting are reported in
Table~\ref{tab:toy_mcmc_quantitative}. The first column of
Figure~\ref{fig:toy_example} visualizes this construction: point locations show
samples of $x$, while color indicates the corresponding scalar observation
value $y$. The scalar nonlinear observation is many-to-one,
so a fixed value $y_0$ can be compatible with multiple prior-supported regions.
From a Bayesian perspective, the likelihood selects the regions of the prior
that are compatible with $y_0$, producing multimodal or disconnected posterior
support. We compare joint twist-flow with a direct conditional-flow baseline. For each
fixed observation $y_0$, both models generate posterior samples, which are
compared against Markov chain Monte Carlo (MCMC) reference samples. Details of
data generation and MCMC sampling are provided in
Appendix~\ref{app:toy_data_generation} and Appendix~\ref{app:toy_mcmc}.

Table~\ref{tab:toy_mcmc_quantitative} and Figure~\ref{fig:toy_example} show
that joint twist-flow better matches the MCMC posterior reference. The table
quantifies marginal posterior mismatch: joint twist-flow reduces both
Wasserstein and Kolmogorov--Smirnov discrepancies for $x_1$ and $x_2$ in all
three settings. These improvements agree with the marginal plots in
Figure~\ref{fig:toy_example}, where the twist-flow marginals more closely
follow the MCMC reference than the conditional-flow marginals. The visual comparisons reveal the failure modes more directly. In
\texttt{two\_moons}, the conditional-flow baseline covers only a small portion
of the curved posterior support, leading to missing branches and mismatched
marginals. Joint twist-flow recovers a broader portion of the curved feasible
regions. In \texttt{ring}, the conditional-flow samples collapse to narrow arcs
instead of following the closed posterior support, which appears as sharp peaks
and gaps in the marginals. Joint twist-flow better preserves the ring-shaped
support and produces marginals closer to the MCMC reference. In
\texttt{triangular cluster}, the conditional-flow baseline produces elongated structures
near or between separated clusters, indicating off-support mass and distorted
modes. Joint twist-flow keeps samples concentrated near the three feasible
posterior regions and better matches the reference marginals.

Together, these results show that the toy problems expose the intended
posterior-support failure mode of direct conditional sampling. Joint twist-flow
reduces latent underuse and improves support preservation under paired training,
rather than only producing samples that are associated with the fixed
observation.


\vspace{-0.2cm}
\subsection{Image restoration: observation-aware posterior variability}
\vspace{-0.1cm}
\label{sec:image_restoration}
We next evaluate joint twist-flow on image restoration tasks. For each test
case, we have a clean reference image $x_{\rm ref}$ and generate the observation
as $y=F(x_{\rm ref})$, but we do not have reference samples from the true
posterior $p(x\mid y)$. Since the degradation operators are many-to-one,
$x_{\rm ref}$ is only one plausible solution compatible with $y$. We therefore
evaluate whether generated samples remain observation-consistent while
expressing variability in components not determined by the observation.

We use CelebA\cite{liu2015deep} images as clean targets $x$. Images are center-cropped, resized to
$64\times64$, and normalized to $[-1,1]$. Observations are generated by
operators $y=F(x)$: Fourier low-pass filtering, blur-downsampling, and
blur-masking. We use $u$ to denote the task-specific component weakly
constrained by the observation: high-frequency residual
$x-F_{\mathrm{LF}}(x)$ for low-frequency filtering, fine-scale detail
$x-F_{\mathrm{BD}}(x)$ for blur-downsampling, and missing-region content
$(1-M)x$ for blur-masking. Thus, the three operators preserve different parts
of the image and leave different components ambiguous.

Figures~\ref{fig:lf}, \ref{fig:downup}, and \ref{fig:mask} compare CondFlow and
joint twist-flow using the same layout. We show posterior samples, sample mean,
sample standard deviation, RMSE maps relative to the paired reference,
principal component analysis (PCA) projections, and coverage diagnostics for
both $F(x)$ and $u$. The sample standard deviation and PCA plots characterize
the overall spread of the generated ensemble. The $F(x)$ standard-deviation
maps, together with the coverage of the observed component, indicate whether
this spread remains consistent with the fixed observation. The corresponding
maps and coverage diagnostics for $u$ show whether variability appears in the
component weakly constrained by the observation.

Across all three tasks, CondFlow forms narrow PCA clusters and has low sample
standard deviation, indicating under-dispersed conditional samples. Its
$F(x)$ standard deviation is also small, but its $u$ standard deviation is small
as well, showing that little diversity is expressed in the component not
determined by the observation. In low-frequency filtering and
blur-downsampling, this narrow ensemble still shows visible RMSE relative to the
paired reference, consistent with a reconstruction that averages unresolved
high-frequency or fine-scale details. In blur-masking, CondFlow is similarly
under-dispersed, but the visible high-resolution context strongly constrains the
missing region, so the narrow reconstruction remains close to the paired
reference and yields lower RMSE.

Joint twist-flow shows a different pattern. It produces larger sample standard
deviation and broader PCA clouds, indicating a wider generated ensemble. Its
$F(x)$ standard-deviation maps remain small relative to the image-space spread,
and the $I_y$ values in Table~\ref{tab:image_restoration_summary} remain high,
showing that generated samples remain consistent with the fixed observation. At
the same time, the $u$ standard-deviation maps are larger and the $I_r$ values
improve, indicating that the increased variability appears in the component weakly
constrained by the observation. Because $x_{\rm ref}$ is only one plausible solution, larger sample-to-reference
RMSE should not be interpreted as worse posterior sampling by itself. Instead,
together with high $I_y$ and increased variability in $u$, it indicates that
twist-flow explores multiple observation-compatible solutions rather than
collapsing to the paired reference.
Quantitatively, Table~\ref{tab:image_restoration_summary} shows the same trend
over three test cases: joint twist-flow consistently increases sample spread
and $I_r$ while maintaining high $I_y$. In blur-downsampling, image-space
pairwise RMSE increases from $0.069$ to $0.202$, and $I_r$ improves from
$0.710$ to $0.881$; the low-frequency and blur-masking tasks follow the same
pattern. Together with the PCA, standard-deviation, and coverage diagnostics,
these results support the intended behavior of the augmented transport: joint
twist-flow increases posterior variability while organizing it around
compatibility with the fixed observation.

\begin{table*}[t]
\centering
\scriptsize
\setlength{\tabcolsep}{2.0pt}
\renewcommand{\arraystretch}{1.04}
\caption{
Observation-aware posterior variability on CelebA restoration tasks.
Each task uses observations of the form $y=F(x)$, while $u$ denotes the
task-specific weakly constrained component. Here $\mathcal F$ denotes the
Fourier transform, $P_{\Omega}$ is a centered low-frequency Fourier mask,
$B$ is the blur operator, $D$ and $U$ denote downsampling and upsampling,
$M$ is the observed-region mask, and $c$ is the constant fill value in the
masked region. Metrics are averaged over three test cases. $D_x$ is
image-space pairwise RMSE across samples, $S_x$ is mean sample standard
deviation, and $D_r$ is pairwise RMSE in $u$. $I_y$ and $I_r$ are empirical
90\% interval inclusion rates in the observed component and in $u$,
respectively.
}
\label{tab:image_restoration_summary}
\begin{tabular}{
@{}
l
p{0.315\textwidth}
p{0.145\textwidth}
l
ccccc
@{}
}
\toprule
Task
& Observation $y=F(x)$
& Weak component $u$
& Model
& $D_x \uparrow$
& $S_x \uparrow$
& $D_r \uparrow$
& $I_y \uparrow$
& $I_r \uparrow$ \\
\midrule

Low-frequency
&
$y=F_{\mathrm{LF}}(x)=\mathcal{F}^{-1}(P_{\Omega}\mathcal{F}x)$
&
$u=x-F_{\mathrm{LF}}(x)$
& CondFlow
& 0.062
& 0.036
& 0.062
& 0.821
& 0.594 \\
&
&
&
Twist
& 0.156
& 0.093
& 0.126
& 0.977
& 0.848 \\

\midrule

Blur-downsampling
&
$y=F_{\mathrm{BD}}(x)=U(D(B(x)))$
&
$ u=x-F_{\mathrm{BD}}(x)$
& CondFlow
& 0.069
& 0.041
& 0.069
& 1.000
& 0.710 \\
&
&
&
Twist
& 0.202
& 0.121
& 0.193
& 0.996
& 0.881 \\

\midrule

Blur-masking
&
$y=F_{\mathrm{M}}(x)=MB(x)+(1-M)c$
&
$u=(1-M)x$
& CondFlow
& 0.036
& 0.022
& 0.036
& 1.000
& 0.845 \\
&
&
&
Twist
& 0.212
& 0.127
& 0.165
& 0.998
& 0.889 \\

\bottomrule
\end{tabular}
\end{table*}
\vspace{-0.2cm}
\subsection{Seismic inverse problem}
\label{sec:seismic}
\vspace{-0.1cm}

We further evaluate joint twist-flow on an image-domain seismic inverse problem.
The target variable $x$ is a two-dimensional subsurface velocity model sampled
on a $256\times512$ grid with $12.5\,\mathrm{m}$ spacing, corresponding to a
section of approximately $3.2\,\mathrm{km}$ in depth and $6.4\,\mathrm{km}$ in
the horizontal direction. In a full seismic experiment, observed shot records
$d_{\mathrm{obs}}$ are generated by wave propagation through the subsurface, $  d_{\mathrm{obs}} = \mathcal{F}(x) + \epsilon ,$ where $\mathcal{F}$ denotes the acoustic wave-equation modeling operator and
$\epsilon$ represents noise and modeling error. Rather than conditioning the
generative model directly on the raw shot records, we use a reverse-time
migration (RTM) image as the observation $y$. RTM maps recorded wavefields back to the subsurface using a fixed background velocity model and produces an image-domain summary of reflector information \cite{baysal1983reverse, etgen2009overview}. This setting remains strongly ill posed from a geophysical perspective. The RTM image is band-limited, acquisition-dependent, and sensitive to the background
model used during imaging \cite{zhou2018reverse}. It highlights reflector structure but does not
uniquely determine the full velocity model, especially in deeper regions,
thin-layered zones, and areas where illumination is limited. The experiment
therefore tests whether the model can produce an RTM-conditioned ensemble of
velocity models rather than a single overconfident reconstruction. Seismic data
are simulated using \textit{Devito}~\cite{louboutin2019devito} and
\textit{JUDI.jl}~\cite{witte2019large}, with simulation details provided in
Appendix~\ref{app:seis_details}.

Figure~\ref{fig:seismic} compares the conditional-flow baseline and joint
twist-flow under the same fixed RTM observation. Both methods recover the
dominant layered structure and the deep high-velocity region in the posterior
mean. The main difference lies in the posterior spread. CondFlow produces a
narrow ensemble with low standard deviation over most of the model, indicating
an overconfident reconstruction. Joint twist-flow produces larger and more
structured uncertainty, concentrated around reflector interfaces, thin layers,
structural transitions, and deeper regions where the RTM image provides weaker
constraints. The RMSE maps show larger sample-to-reference deviations for
twist-flow, but these deviations align with the same uncertain geological
regions rather than appearing as unstructured noise. The bottom row of Figure~\ref{fig:seismic} provides an observation-space check.
Generated velocity samples are forward modeled under the same acquisition
geometry, and the resulting shot-record bands are compared with the
ground-truth shot record. Joint twist-flow yields a wider shot-record band that
better covers the ground-truth trace in this example, while the receiver-wise
coverage profile remains high across the acquisition aperture. This indicates
that the larger model-space variability does not simply discard the seismic
observation; it produces a broader set of velocity models that remain compatible
with the observed data.

Table~\ref{tab:seismic_metrics} summarizes reference metrics over 64 posterior
samples. CondFlow gives lower per-sample error and higher per-sample SSIM,
consistent with a narrow reconstruction-like ensemble clustered near the paired
reference. Joint twist-flow has larger per-sample error and lower per-sample
SSIM, reflecting a broader posterior ensemble rather than collapse to the single
paired reference. Importantly, the posterior-mean RMSE and SSIM remain close to
those of CondFlow, indicating that the additional variability mainly increases
ensemble spread rather than shifting the mean reconstruction. Together with the
shot-record consistency check, this suggests that joint twist-flow produces a
broader observation-compatible posterior ensemble around a comparable velocity
reconstruction.
\begin{figure}[t]
    \centering
    \includegraphics[width=1.03\columnwidth]{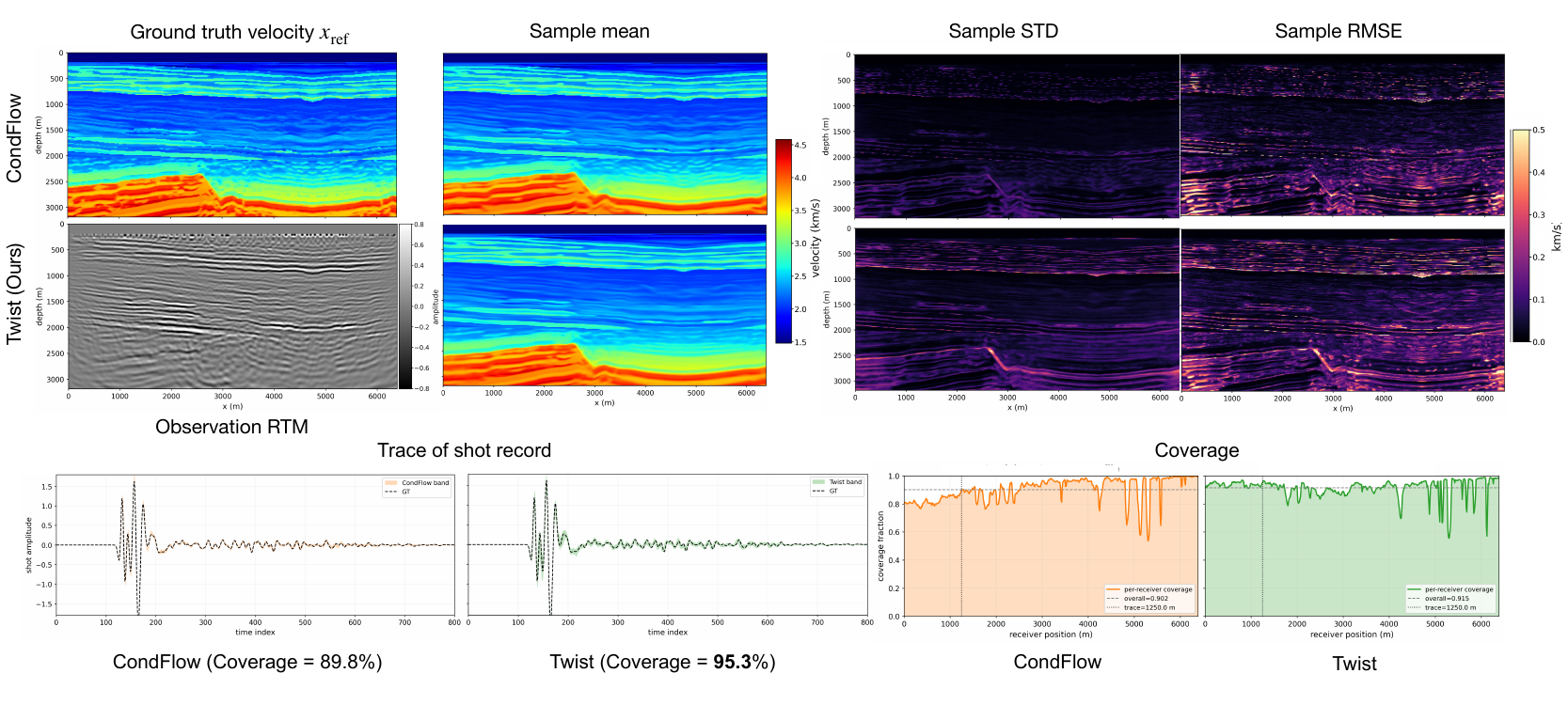}
    \caption{Seismic posterior sampling under a fixed RTM observation. The top rows show
    the ground-truth velocity and RTM image, posterior means, standard-deviation
    maps, and RMSE maps for CondFlow and joint twist-flow. Joint twist-flow preserves the main velocity structure while assigning larger uncertainty to reflector interfaces, structural transitions, and deeper weakly constrained regions. The bottom row forward models generated velocity samples under the
    same acquisition; shot-record bands and receiver-wise coverage profiles provide an observation-space consistency check.}
    \label{fig:seismic}
\end{figure}
\vspace{-0.2cm}

\begin{table}[!htbp]
\centering
\footnotesize
\setlength{\tabcolsep}{3pt}
\renewcommand{\arraystretch}{1.02}
\caption{Seismic reference metrics over 64 posterior samples}
\label{tab:seismic_metrics}
\begin{tabular*}{0.98\columnwidth}{@{\extracolsep{\fill}}lcccc@{}}
\toprule
& \multicolumn{2}{c}{Per-sample} 
& \multicolumn{2}{c}{Posterior mean} \\
\cmidrule(lr){2-3}\cmidrule(lr){4-5}
Metric & CondFlow & Twist & CondFlow & Twist \\
\midrule
RMSE & $0.101 \pm 0.001$ & $0.118 \pm 0.005$ & $0.0919$ & $0.0921$ \\
MAE  & $0.0565 \pm 0.0006$ & $0.0680 \pm 0.0027$ & $0.0512$ & $0.0564$ \\
SSIM & $0.834 \pm 0.003$ & $0.777 \pm 0.011$ & $0.857$ & $0.841$ \\
\bottomrule
\end{tabular*}
\end{table}
\vspace{-0.2cm}
\section{Conclusions}
\label{sec:conclusions}
\vspace{-0.1cm}
We proposed joint twist-flow, an augmented flow-matching framework for posterior
sampling in inverse problems. The central idea is to learn a continuous
transport on the augmented state $(z_x,y)\mapsto(x,z_y)$ rather than an
$x$-valued conditional decoder. In this formulation, $z_x$ drives posterior
sampling under a fixed observation, while $z_y$ provides a likelihood-side
residual coordinate paired with the observation branch. The auxiliary coordinate
does not replace uncertainty in $x$; posterior uncertainty remains measured by
the variability of generated $x$-samples. The experiments support three related conclusions. First, in controlled toy
inverse problems, joint twist-flow reduces latent underuse, mode dropping, and
support distortion relative to a direct conditional-flow baseline. Second, in
image restoration, the augmented transport produces observation-aware posterior
variability: uncertainty increases in weakly constrained components while
remaining suppressed in components that determine the observation. Third, the
seismic experiment shows that the same formulation can be applied to a
high-dimensional scientific inverse problem where a single observation may be
compatible with multiple plausible subsurface structures. A promising direction for future work is to improve calibration diagnostics for
amortized posterior samplers, and explore
adaptive choices of the observation-side scale $\sigma_y$. Another direction is
to use the reverse direction of the learned augmented transport more explicitly
for likelihood-side diagnostics and forward uncertainty propagation.

{\small
\bibliographystyle{unsrtnat}
\bibliography{references}
}

\appendix
\clearpage

\section{Supplementary materials}
\label{sec:rationale}

\subsection{Topological Obstruction in Conditional Inverse Transport}
\label{app:topological_obstruction}

In this appendix, we formalize a topological obstruction for direct conditional
transport in inverse problems, and explain why the proposed joint twist transport can reduce
the parameterization pressure that gives rise to this obstruction. Our goal is not
to claim that the augmented map removes the topological difficulty in the
projected $x$-space, but to motivate why coupling the target branch with a
likelihood-side coordinate can make the learned conditional transport less prone
to empirical failure modes such as mode dropping, spurious bridges, or
off-support mass.

\subsubsection{Setup}

We consider the inverse observation model
\begin{equation}
x \sim p_X(x), 
\qquad 
y = h(x) + \varepsilon,
\label{eq:inverse_model}
\end{equation}
where $x \in \mathcal X \subset \mathbb R^{d_x}$, $y \in \mathcal Y \subset \mathbb R^{d_y}$, $h:\mathcal X \to \mathcal Y$ is a continuous forward operator, and $\varepsilon$ is observational noise with density $p_\varepsilon$.

The induced joint density is
\begin{equation}
p(x,y) = p_X(x)\, p_\varepsilon\!\left(y-h(x)\right),
\label{eq:joint_density}
\end{equation}
and the posterior satisfies
\begin{equation}
p(x\mid y) \propto p_X(x)\, p_\varepsilon\!\left(y-h(x)\right).
\label{eq:posterior_general}
\end{equation}
For isotropic Gaussian noise, $\varepsilon \sim \mathcal N(0,\sigma^2 I)$, this becomes
\begin{equation}
p(x\mid y)
\propto
p_X(x)\exp\!\left(-\frac{\|y-h(x)\|^2}{2\sigma^2}\right).
\label{eq:posterior_gaussian}
\end{equation}

We compare the following two parameterizations.

\paragraph{Direct conditional transport.}
$$
G : \mathbb R^{d_z} \times \mathbb R^{d_y} \to \mathbb R^{d_x},
\qquad
x = G(z,y).
$$

\paragraph{joint twist transport.}
$$
T : \mathbb R^{d_z} \times \mathbb R^{d_y}
\to
\mathbb R^{d_x} \times \mathbb R^{d_{z_y}},
\qquad
(x,z_y) = T(z_x,y).
$$

Our main interest is the geometry of the posterior support for a fixed observation $y=y_0$.

\begin{definition}[Posterior superlevel set]
For a fixed $y_0 \in \mathcal Y$ and threshold $\lambda > 0$, define the posterior $\lambda$-superlevel set by
$$
L_{y_0,\lambda}
:=
\left\{
x \in \mathcal X : p(x\mid y_0)\ge \lambda
\right\}.
$$
\end{definition}

\begin{definition}[Latent superlevel set and its image]
Let $p_Z$ denote the latent density, and for $\lambda_Z>0$ define
$$
L^Z_{\lambda_Z}
:=
\left\{
z \in \mathbb R^{d_z} : p_Z(z)\ge \lambda_Z
\right\}.
$$
For a fixed $y_0$, define the fiber map
$$
G_{y_0}(z) := G(z,y_0),
$$
and the image of the latent high-density region
$$
\mathcal G_{y_0,\lambda_Z}
:=
G_{y_0}(L^Z_{\lambda_Z}).
$$
\end{definition}

\begin{definition}[Spurious bridge]
A connected set $B \subset \mathcal G_{y_0,\lambda_Z}$ is called a \emph{spurious bridge} if
\begin{enumerate}
    \item $B$ intersects neighborhoods of two distinct posterior branches, and
    \item $B$ is not contained in the true posterior superlevel set $L_{y_0,\lambda}$, except possibly near its endpoints.
\end{enumerate}
\label{def:spurious_bridge}
\end{definition}

\begin{assumption}[Non-injective forward map yields multiple explanations]
There exist $y_0 \in \mathcal Y$ and two nonempty compact sets $A_1,A_2 \subset \mathcal X$ such that
\begin{enumerate}
    \item $A_1 \cap A_2 = \varnothing$,
    \item $\mathrm{dist}(A_1,A_2) > 0$,
    \item $h(x)=y_0$ for all $x \in A_1 \cup A_2$,
    \item $p_X(x)$ is strictly positive on neighborhoods of both $A_1$ and $A_2$.
\end{enumerate}
This assumption encodes posterior multimodality induced by the inverse problem. It formalizes the case where a single observation admits multiple, well-separated explanations in $\mathcal X$, which is the regime of interest in this work.
\label{ass:multiple_explanations}
\end{assumption}

\begin{assumption}[Response gap away from the true branches]
There exist disjoint open neighborhoods $U_1 \supset A_1$ and $U_2 \supset A_2$, and a constant $\eta>0$, such that
\begin{equation}
\|h(x)-y_0\| \ge \eta,
\qquad
\forall x \in \mathcal X \setminus (U_1 \cup U_2).
\label{eq:response_gap}
\end{equation}
This assumption ensures that the ambiguity is localized near the true explanatory branches: outside these neighborhoods, the forward response remains uniformly separated from $y_0$. Under small observational noise, this yields a posterior whose high-density region is disconnected rather than merely diffuse.
\label{ass:response_gap}
\end{assumption}

\begin{assumption}[Continuity of the conditional fiber map]
For the fixed observation $y_0$, the map
$$
G_{y_0} : \mathbb R^{d_z} \to \mathbb R^{d_x}
$$
is continuous.
This is the natural regime for direct conditional generators parameterized by neural networks, flow matching, or ODE-based transports, where $x=G(z,y_0)$ depends continuously on the latent variable.
\label{ass:continuity}
\end{assumption}

\begin{assumption}[Connected latent high-density region]
For some $\lambda_Z>0$, the latent superlevel set $L^Z_{\lambda_Z}$ is connected.
This assumption is standard for unimodal base distributions, such as a Gaussian, and captures the common design in which posterior variability is generated from a single connected latent source.
\label{ass:connected_latent}
\end{assumption}

Assumptions~\ref{ass:multiple_explanations}--\ref{ass:response_gap} describe the geometry induced by the inverse problem, while Assumptions~\ref{ass:continuity}--\ref{ass:connected_latent} describe the structural constraints of a direct conditional parameterization. Under these assumptions, the posterior high-density set is disconnected, while the image of a connected latent high-density region under a continuous direct conditional map remains connected. This is what forces support distortion in direct conditional flows.

\subsubsection{Multimodal posterior geometry}

\begin{lemma}[Disconnected posterior branches under a non-injective inverse problem]
Under Assumptions~\ref{ass:multiple_explanations}--\ref{ass:response_gap}, there exists $\sigma_0>0$ such that for all $0<\sigma\le \sigma_0$, there exists $\lambda>0$ for which the posterior superlevel set $L_{y_0,\lambda}$ has at least two distinct connected components, one contained in a neighborhood of $A_1$ and the other contained in a neighborhood of $A_2$.
\label{lem:posterior_multibranch}
\end{lemma}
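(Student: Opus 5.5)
We work with the isotropic Gaussian posterior \eqref{eq:posterior_gaussian}, written through the unnormalized density $q_\sigma(x)=p_X(x)\exp(-\|y_0-h(x)\|^2/2\sigma^2)$. Then $p(x\mid y_0)=q_\sigma(x)/Z_\sigma$ with $Z_\sigma=\int q_\sigma$. Superlevel sets of $p(\cdot\mid y_0)$ are exactly superlevel sets of $q_\sigma$ at threshold $\lambda Z_\sigma$. It therefore suffices to find a threshold $\mu>0$ with two properties: $\{q_\sigma\ge\mu\}$ contains points near $A_1$ and near $A_2$, and it avoids $\mathcal X\setminus(U_1\cup U_2)$. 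Setting $\lambda=\mu/Z_\sigma$ then gives the claim.

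\textbf{Step 1 (lower bound on the branches).} By Assumption~\ref{ass:multiple_explanations}(3)--(4), at any $x\in A_1\cup A_2$ we have $h(x)=y_0$, so $q_\sigma(x)=p_X(x)$. Compactness of $A_i$ and positivity of $p_X$ near $A_i$ would give $m:=\min_i\inf_{A_i}p_X>0$. This needs lower semicontinuity of $p_X$, or else we simply use one point $a_i\in A_i$ with $p_X(a_i)>0$; only nonemptiness of each component is needed. Hence $q_\sigma(a_i)\ge m$ for every $\sigma$.

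\textbf{Step 2 (upper bound away from the branches).} On $\mathcal X\setminus(U_1\cup U_2)$, the response gap \eqref{eq:response_gap} gives $q_\sigma(x)\le M e^{-\eta^2/2\sigma^2}$, where $M:=\sup p_X$. This requires $p_X$ to be bounded, and I would state that as a mild implicit regularity condition on the prior. Choose $\sigma_0$ with $Me^{-\eta^2/2\sigma_0^2}<m/2$. For $\sigma\le\sigma_0$ set $\mu=m/2$. Then $L:=\{q_\sigma\ge\mu\}$ is contained in $U_1\cup U_2$ and contains $a_1\in U_1$ and $a_2\in U_2$.

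\textbf{Step 3 (disconnection).} The sets $L\cap U_1$ and $L\cap U_2$ are disjoint, nonempty, and relatively open in $L$, since $U_1,U_2$ are open and disjoint. Their union is $L$, so they form a separation of $L$. Consequently the connected component of $a_1$ lies in $U_1$, the component of $a_2$ lies in $U_2$, and these are distinct. This gives at least two components, located in the neighborhoods $U_1\supset A_1$ and $U_2\supset A_2$ respectively.

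The main obstacle is regularity: we need $p_X$ bounded above, and $Z_\sigma$ must be finite and positive so that $\lambda$ is well defined. Positivity of $Z_\sigma$ follows if $p_X$ and $h$ are continuous near $a_i$, since then $q_\sigma$ is positive on an open set. Finiteness follows from $q_\sigma\le p_X$, so $Z_\sigma\le1$. No continuity of $p_X$ is needed for the separation argument, because it uses only the openness of the $U_i$.
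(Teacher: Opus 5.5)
Your proof is correct and takes the same route as the paper: a lower bound at points of $A_1,A_2$ where the residual vanishes, an exponentially small upper bound on $\mathcal X\setminus(U_1\cup U_2)$ from the response gap and a bounded prior, a threshold placed between the two for small $\sigma$, and disconnection of the resulting superlevel set. Two steps that the paper leaves informal are made rigorous in your version: working with the unnormalized density $q_\sigma$ lets $Z_\sigma$ cancel from the comparison (the paper's constant $C$ silently absorbs it), and you write out the separation explicitly via the disjoint open sets $U_1,U_2$.
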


\begin{proof}
By Assumption~\ref{ass:multiple_explanations}, we have $h(x)=y_0$ for every $x\in A_1\cup A_2$. Therefore, for such $x$, the likelihood term $p_\varepsilon(y_0-h(x))$ is maximized. Since $p_X$ is strictly positive on neighborhoods of $A_1$ and $A_2$, the posterior density
$$
p(x\mid y_0)\propto p_X(x)\,p_\varepsilon(y_0-h(x))
$$
is strictly positive and comparatively large on neighborhoods of both sets.

On the other hand, by Assumption~\ref{ass:response_gap}, for every $x\in \mathcal X\setminus(U_1\cup U_2)$ we have $\|h(x)-y_0\|\ge \eta$. In the Gaussian-noise case, this yields
\begin{equation}
p(x\mid y_0)
\le
C \exp\!\left(-\frac{\eta^2}{2\sigma^2}\right)
\qquad
\forall x \in \mathcal X\setminus(U_1\cup U_2),
\label{eq:posterior_outside_gap}
\end{equation}
for some constant $C>0$ depending on an upper bound of the prior density. As $\sigma\to 0$, the right-hand side tends to zero.

Hence, for sufficiently small $\sigma$, one can choose $\lambda>0$ such that
\begin{enumerate}
    \item $p(x\mid y_0)\ge \lambda$ on neighborhoods of $A_1$ and $A_2$,
    \item $p(x\mid y_0)<\lambda$ on $\mathcal X\setminus(U_1\cup U_2)$.
\end{enumerate}
Therefore, $L_{y_0,\lambda}$ contains at least two disconnected high-density branches, one near $A_1$ and one near $A_2$.
\end{proof}

\subsubsection{Topological obstruction for direct conditional transport}

\begin{theorem}[Connected latent fibers cannot match a disconnected posterior component structure]
Under Assumptions~\ref{ass:multiple_explanations}--\ref{ass:connected_latent}, let $\lambda>0$ be chosen as in Lemma~\ref{lem:posterior_multibranch}. Then the image
$$
\mathcal G_{y_0,\lambda_Z} = G_{y_0}(L^Z_{\lambda_Z})
$$
is connected, and therefore cannot match the connected-component structure of the true posterior superlevel set $L_{y_0,\lambda}$ whenever $L_{y_0,\lambda}$ has at least two connected components.
\label{thm:topological_obstruction}
\end{theorem}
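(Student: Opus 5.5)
The argument rests on one topological fact: continuous images of connected sets are connected. I will apply it to the fiber map and then compare component counts.

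\textbf{Step 1 (connectedness of the image).} By Assumption~\ref{ass:connected_latent}, $L^Z_{\lambda_Z}$ is connected. By Assumption~\ref{ass:continuity}, $G_{y_0}$ is continuous on $\mathbb R^{d_z}$, so its restriction to $L^Z_{\lambda_Z}$ is continuous in the subspace topology. I will verify the standard fact directly. Suppose $\mathcal G_{y_0,\lambda_Z}=V_1\cup V_2$ with $V_1,V_2$ disjoint, nonempty and relatively open. Then the preimages $G_{y_0}^{-1}(V_i)\cap L^Z_{\lambda_Z}$ are disjoint, nonempty, relatively open, and cover $L^Z_{\lambda_Z}$. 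This contradicts connectedness, so $\mathcal G_{y_0,\lambda_Z}$ is connected.

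\textbf{Step 2 (component structure of the posterior).} Take $\lambda$ and $\sigma\le\sigma_0$ as furnished by Lemma~\ref{lem:posterior_multibranch}. Then $L_{y_0,\lambda}$ has at least two components $C_1\subset U_1$ and $C_2\subset U_2$. The lemma's proof also gives $L_{y_0,\lambda}\subset U_1\cup U_2$, where $U_1,U_2$ are the disjoint open sets of Assumption~\ref{ass:response_gap}.

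\textbf{Step 3 (no match).} I interpret "matching the connected-component structure" as either of two conditions. The first is $\mathcal G_{y_0,\lambda_Z}=L_{y_0,\lambda}$. The second is that $\mathcal G_{y_0,\lambda_Z}\subset L_{y_0,\lambda}$ while meeting both $C_1$ and $C_2$. The first condition is impossible because homeomorphic spaces, in particular equal ones, have the same number of components, and we compare one against at least two. For the second, suppose a connected set $\mathcal G$ satisfies $\mathcal G\subset L_{y_0,\lambda}\subset U_1\cup U_2$ and meets both $U_1$ and $U_2$. Then $\{\mathcal G\cap U_1,\ \mathcal G\cap U_2\}$ is a separation of $\mathcal G$, which is a contradiction. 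Hence any connected image that reaches both branches must leave $U_1\cup U_2$. On that exterior region the posterior density is $<\lambda$ by \eqref{eq:posterior_outside_gap}. This is exactly a spurious bridge in the sense of Definition~\ref{def:spurious_bridge}, a useful corollary.

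\textbf{Main obstacle.} The only delicate point is the ill-specified phrase "match the connected-component structure". I will make it precise in Step 3 via the two interpretations above, so that the conclusion is a rigorous statement. The topological content in Step 1 is routine. I also need $U_1\cap U_2=\varnothing$ with both sets open, which Assumption~\ref{ass:response_gap} supplies.
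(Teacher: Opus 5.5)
Your proposal is correct and follows the paper's proof: connectedness of $L^Z_{\lambda_Z}$ together with continuity of $G_{y_0}$ makes the image connected, and Lemma~\ref{lem:posterior_multibranch} gives $L_{y_0,\lambda}$ at least two components, so the two component structures cannot coincide. Your Step~3 goes slightly beyond the paper's proof of this theorem: the separation argument (a connected subset of $L_{y_0,\lambda}\subset U_1\cup U_2$ cannot meet both branches, so it must leave $U_1\cup U_2$) essentially anticipates the paper's argument for Corollary~\ref{cor:three_failures}.
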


\begin{proof}
By Assumption~\ref{ass:connected_latent}, the latent high-density set $L^Z_{\lambda_Z}$ is connected. By Assumption~\ref{ass:continuity}, the fiber map $G_{y_0}$ is continuous. Since the continuous image of a connected set is connected,
$$
\mathcal G_{y_0,\lambda_Z} = G_{y_0}(L^Z_{\lambda_Z})
$$
must be connected.

By Lemma~\ref{lem:posterior_multibranch}, the true posterior superlevel set $L_{y_0,\lambda}$ has at least two connected components. Hence $\mathcal G_{y_0,\lambda_Z}$ cannot reproduce the connected-component structure of $L_{y_0,\lambda}$.
\end{proof}

\begin{corollary}[Three unavoidable failure modes of direct conditional transport]
Under the assumptions of Theorem~\ref{thm:topological_obstruction}, if the direct conditional model attempts to represent multiple true posterior branches simultaneously, then at least one of the following must occur:
\begin{enumerate}
    \item \textbf{mode dropping}: one or more true posterior branches are not represented;
    \item \textbf{spurious bridge}: the model creates an artificial connected path between distinct true branches through a low-density region;
    \item \textbf{off-support spurious mass}: the model assigns nontrivial mass to regions outside the true posterior high-density set.
\end{enumerate}
\label{cor:three_failures}
\end{corollary}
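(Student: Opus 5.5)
The plan is a case split on how the connected image $\mathcal G_{y_0,\lambda_Z}$ from Theorem~\ref{thm:topological_obstruction} sits relative to the two branches given by Lemma~\ref{lem:posterior_multibranch}. Fix $\sigma\le\sigma_0$ and $\lambda$ as in that lemma. Let $V_1\supset A_1$ and $V_2\supset A_2$ be the disjoint high-density components, contained in $U_1$ and $U_2$ respectively. Write $\mathcal G:=\mathcal G_{y_0,\lambda_Z}$, which is connected by Theorem~\ref{thm:topological_obstruction}. I read ``represents branch $i$'' as $\mathcal G\cap V_i\neq\varnothing$; neighborhoods would work equally well.

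\emph{Case 1:} $\mathcal G$ misses some $V_i$. Then that branch is not represented, which is mode dropping (item 1).

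\emph{Case 2:} $\mathcal G$ meets both $V_1$ and $V_2$. The key step is to show that $\mathcal G\not\subset L_{y_0,\lambda}$. Suppose instead that $\mathcal G\subset L_{y_0,\lambda}$. Split $L_{y_0,\lambda}$ into the disjoint relatively open pieces $L\cap U_1$ and $L\cap (\mathcal X\setminus \overline{U_1})$. These are disjoint and cover $L$, because $L\subset U_1\cup U_2$ by the construction in the lemma ($p<\lambda$ outside $U_1\cup U_2$) and $U_1\cap U_2=\varnothing$. To be careful about closures, I would shrink $U_1$ and $U_2$ so that their closures are disjoint; Assumption~\ref{ass:multiple_explanations}(2) permits this. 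Both pieces meet $\mathcal G$, so $\mathcal G$ would be disconnected, a contradiction. Hence there is a point $x^\ast\in\mathcal G$ with $x^\ast\notin L_{y_0,\lambda}$.

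Within Case 2 there are two sub-cases. First, suppose the offending points are genuinely between the branches, meaning some $x^\ast\in\mathcal G\setminus(U_1\cup U_2)$. Then $B:=\mathcal G$ itself is a connected set that meets neighborhoods of both branches and leaves $L_{y_0,\lambda}$ away from its endpoints. This is exactly Definition~\ref{def:spurious_bridge}, giving item 2. In fact connectedness forces this sub-case: a connected set meeting both $U_1$ and $U_2$ (disjoint open sets) cannot lie in $U_1\cup U_2$. So $\mathcal G$ meets $\mathcal X\setminus(U_1\cup U_2)$. On that region, by \eqref{eq:posterior_outside_gap}, the posterior density is at most $C\exp(-\eta^2/2\sigma^2)<\lambda$, so the crossing points lie in a low-density region. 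Second, for item 3, I would push the image measure $G_{y_0\#}p_Z$. If it puts positive mass on $\mathcal G\setminus L_{y_0,\lambda}$, that is off-support spurious mass. Otherwise the bridge carries zero mass but still exists as a connected path, which is item 2. Either way the disjunction holds.

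The main obstacle is that items 2 and 3 are stated informally, with phrases like ``nontrivial mass'' and ``except possibly near its endpoints.'' The topological argument yields existence of points of $\mathcal G$ outside $L_{y_0,\lambda}$, but not that they carry positive probability. A point in the image of $L^Z_{\lambda_Z}$ does have a positive-measure neighborhood preimage under continuity. I would use this to get positive mass on a small ball around $x^\ast$: pick $z^\ast$ with $G_{y_0}(z^\ast)=x^\ast$, and note that $p_Z\ge\lambda_Z>0$ near $z^\ast$. Since $\mathcal X\setminus\overline{U_1\cup U_2}$ is open and the posterior is below $\lambda$ there, continuity of $G_{y_0}$ gives an open set of latents mapping into that low-density region. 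I expect this step to settle item 3 cleanly. I would fix these formalizations explicitly before writing the argument.
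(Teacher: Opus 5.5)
Your argument is correct and is essentially the paper's. Both use the same split: either the image misses a branch (mode dropping), or the connected image meets both branches, so it cannot stay inside $L_{y_0,\lambda}$ and must cross a low-density region (bridge). Your separation by the disjoint open sets $U_1,U_2$, together with $L_{y_0,\lambda}\subset U_1\cup U_2$ from the lemma's construction, is a cleaner replacement for the paper's informal path step: it needs only connectedness and places the crossing point in the response-gap region.

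Two side remarks should be dropped or fixed, though neither is needed for the disjunction. First, shrinking $U_i$ is not free, since the response gap is only assumed outside the original $U_1\cup U_2$. It is also unnecessary, because $U_2\cap\overline{U_1}=\varnothing$ holds automatically for disjoint open sets. Second, your positive-mass step fails if $x^\ast$ lies on $\partial U_1\cup\partial U_2$. It would need openness of $\{x:p(x\mid y_0)<\lambda\}$ (e.g.\ continuity of the posterior density) rather than of $\mathcal X\setminus\overline{U_1\cup U_2}$.
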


\begin{proof}
By Theorem~\ref{thm:topological_obstruction}, the image $\mathcal G_{y_0,\lambda_Z}$ is connected, whereas the true posterior superlevel set $L_{y_0,\lambda}$ contains at least two disconnected components. If the model intersects multiple true branches while remaining connected, it must connect them through some path. Such a path cannot lie entirely inside $L_{y_0,\lambda}$; otherwise $L_{y_0,\lambda}$ itself would be connected, contradicting Lemma~\ref{lem:posterior_multibranch}. Therefore, the connecting path must pass through regions of low true posterior density, producing either a spurious bridge or off-support spurious mass. If the model avoids such a connection, then it necessarily fails to represent at least one branch, yielding mode dropping.
\end{proof}

\subsubsection{How joint twist transport changes the parameterization pressure}

The obstruction above is specific to a direct conditional parameterization, where
for fixed $y_0$ the conditional variation is represented through an $x$-valued
fiber map
\[
    G_{y_0}: z_x \mapsto x .
\]
A joint/twist parameterization instead learns an augmented transport between
coupled variables,
\[
    T(z_x,y) = (x,z_y),
\]
rather than an $x$-valued conditional decoder alone. This does not eliminate the
topological difficulty after projecting onto the $x$-coordinates. Instead, it
changes the parameterization by learning a coupled terminal state in which each
generated target is paired with a likelihood-side coordinate. The intended effect
is to reduce the empirical pressure to encode all observation compatibility and
posterior variation solely through the $x$-coordinates.

The auxiliary variable $z_y$ does not add information about $x$ beyond the observation $y$. Rather, it provides the terminal observation-branch coordinate in the joint output space. For fixed $y_0$, consider the restricted map
$$
\widetilde G_{y_0}: z_x \mapsto (x,z_y).
$$
Its local linearization has Jacobian
$$
D\widetilde G_{y_0}(z_x)
=
\begin{pmatrix}
J_x(z_x)\\
J_{z_y}(z_x)
\end{pmatrix}.
$$
Hence, the local map contains both the sensitivity of the generated target and the sensitivity of the terminal observation branch. This should not be read as moving posterior uncertainty away from $x$; rather, it shows that the twist map learns a coupled augmented representation in which generated targets are paired with an observation-side residual coordinate.

This is natural in inverse problems where $(x,y)$ are paired through a forward relation such as
$$
y = h(x) + \varepsilon.
$$
In such settings, $x$ and $y$ share structured information through the joint data distribution. A joint twist transport therefore exploits this paired geometry directly, rather than forcing all posterior structure for fixed $y_0$ to be realized through a single connected latent fiber in $x$-space.

\begin{assumption}[Joint map regularity]
There exists a continuously differentiable map
$$
T:\mathbb R^{d_{z_x}}\times \mathcal Y
\to
\mathcal X \times \mathbb R^{d_{z_y}},
\qquad
T(z_x,y)=(x,z_y),
$$
that is locally invertible on the region of interest.
\label{ass:joint_invertible}
\end{assumption}

\begin{assumption}[Residual conditional variability is locally represented in augmented coordinates]
For a fixed observation $y_0$, let
$$
\widetilde G_{y_0}: z_x \mapsto (x,z_y)
$$
denote the restricted latent-to-output map, and assume that $\widetilde G_{y_0}$ is locally differentiable. We assume that there exists a nonempty subset
$$
\mathcal S_{y_0} \subset \mathcal Z_x
$$
such that for each $z_x \in \mathcal S_{y_0}$, there exists a direction $\delta z_x \neq 0$ satisfying
$$
\|J_x(z_x)\,\delta z_x\| \le \varepsilon_x,
\qquad
\|J_{z_y}(z_x)\,\delta z_x\| \ge \varepsilon_y,
$$
for some small $\varepsilon_x \ge 0$ and some $\varepsilon_y > 0$. Therefore, along certain local latent directions, the terminal observation branch changes nontrivially even when the local change in the generated target is small.
\label{ass:residual_variability}
\end{assumption}

Assumption~\ref{ass:joint_invertible} is natural for flow-based or ODE-based transports: if $T$ is realized as the endpoint map of a sufficiently regular flow, then local invertibility follows from standard ODE theory together with the inverse function theorem.

Assumption~\ref{ass:residual_variability} is motivated by the residual view of inverse problems. Under a Gaussian observation model, the normalized residual $(y-F(x))/\sigma_{\mathrm{obs}}$ has Gaussian scale, so a posterior sample can be associated with both a prior/reference coordinate and a likelihood-residual coordinate. The role of $z_y$ is not to introduce new information about $x$ and not to absorb posterior uncertainty away from $x$. Instead, $z_y$ provides the terminal observation-branch coordinate of the augmented transport. Assumption~\ref{ass:residual_variability} states that this branch has nontrivial local sensitivity, which is a structurally plausible condition for a coupled transport between $(z_x,y)$ and $(x,z_y)$.

\begin{remark}
    [Joint transport reduces the fiber-wise parameterization pressure]
Under Assumptions~\ref{ass:joint_invertible}--\ref{ass:residual_variability}, the joint twist transport provides a less
restrictive augmented parameterization than an $x$-valued conditional decoder.
This does not remove the topological difficulty in the projected $x$-space:
after projection, the generated $x$-samples are still evaluated through their
conditional distribution under the fixed observation. However, the augmented
state allows the learned transport to couple generated targets with a
likelihood-side coordinate, which can reduce the empirical tendency to represent
all conditional variation through the $x$-coordinates alone.
\label{prop:joint_alleviates}
\end{remark}

\begin{proof}[Proof sketch]
For a direct conditional model, fixing $y_0$ yields a single map
$$
G_{y_0}: z_x \mapsto x,
$$
so the conditional geometry in $x$-space must be realized entirely through the image of one latent fiber. This is precisely the setting of Theorem~\ref{thm:topological_obstruction}: if the latent high-density region is connected and the map is continuous, then its image remains connected, which creates difficulty when the true posterior high-density set is disconnected.

By contrast, a joint twist map
$$
T(z_x,y)=(x,z_y)
$$
acts between augmented spaces and preserves a coupled source--target structure. For fixed $y_0$, the relevant restricted map is
$$
\widetilde G_{y_0}: z_x \mapsto (x,z_y).
$$
Thus, the learned map is a full augmented transport rather than an $x$-valued decoder. In a local linearization,
$$
D\widetilde G_{y_0}(z_x)
=
\begin{pmatrix}
J_x(z_x)\\
J_{z_y}(z_x)
\end{pmatrix},
$$
the terminal observation branch can vary together with the generated target. Therefore, the model is not reduced to a single $x$-valued decoder in the same restrictive sense as a direct conditional map. While this does not remove the difficulty of representing a multi-branch posterior, it weakens the fiber-wise pressure that in the direct conditional setting tends to create spurious bridges or to discard branches.
\end{proof}

\begin{remark}[Compatibility with flow matching]
\label{rem:flow_matching_joint}
The joint twist parameterization is naturally compatible with flow matching \cite{lipman2022flow}. Let
$$
s_0=(z_x,y),
\qquad
s_1=(x,z_y).
$$
Consider a continuous probability path
$$
\{\pi_t^{\mathrm{joint}}(\cdot \mid s_0,s_1)\}_{t\in[0,1]}
$$
on $\mathbb R^{d_x+d_{z_y}}$ connecting $s_0$ and $s_1$, and let
$$
s_t \sim \pi_t^{\mathrm{joint}}(\cdot \mid s_0,s_1),
\qquad
u_t^{\mathrm{joint}}(s_t \mid s_0,s_1)\in\mathbb R^{d_x+d_{z_y}}
$$
denote the corresponding target velocity field along this path. Flow matching then trains a neural velocity field to regress
$$
u_t^{\mathrm{joint}}(s_t \mid s_0,s_1)
$$
under the induced path distribution. Thus, provided the joint data distribution admits a sufficiently regular transport representation, a continuous flow-matching model can in principle learn the corresponding joint map.
\end{remark}

\subsubsection{A least-squares residual interpretation of twist flow}
\label{app:twist_least_squares}

The discussion above explains why the auxiliary coordinate $z_y$ is useful for defining a full augmented transport. We now give a complementary optimization interpretation: the learned twist flow can be viewed as inducing a family of augmented least-squares consistency problems. This interpretation is not meant to claim that each flow-matching step is identical to a classical least-squares gradient step. Rather, it states that once the joint transport has been learned, its endpoint can be characterized as the minimizer of a learned augmented residual.

\paragraph{1. Flow matching learns a joint transport.}
Let
\begin{equation}
    s_0=(z_x,y),
    \qquad
    s_1=(x,z_y).
\end{equation}
A flow-matching model learns a time-dependent velocity field $v_\theta(s,t)$ whose ODE
\begin{equation}
    \frac{d s_t}{dt}=v_\theta(s_t,t),
    \qquad t\in[0,1],
\end{equation}
defines an endpoint map
\begin{equation}
    T_\theta:s_0\mapsto s_1^\theta .
\end{equation}
In the twist parameterization, this endpoint map has the form
\begin{equation}
    T_\theta:(z_x,y)\mapsto (x,z_y).
\end{equation}
Thus, flow matching is used to learn a transport between the augmented source variables $(z_x,y)$ and the augmented target variables $(x,z_y)$, rather than a one-way conditional decoder from $(z_x,y)$ to $x$ alone.

\paragraph{2. The same map induces inverse and forward solvers.}
For a fixed observation $y_0$, the forward integration of the learned ODE gives an inverse sampler
\begin{equation}
    (\hat x,\hat z_y)=T_\theta(z_x,y_0),
    \qquad
    \hat x_\theta(y_0,z_x)=\pi_xT_\theta(z_x,y_0).
\end{equation}
If the learned ODE is sufficiently regular, its endpoint map is locally invertible, and the reverse map satisfies
\begin{equation}
    T_\theta^{-1}:(x,z_y)\mapsto(z_x,y).
\end{equation}
Therefore, the same learned transport also induces a forward predictor
\begin{equation}
    \hat y_\theta(x,z_y)
    :=
    \pi_yT_\theta^{-1}(x,z_y).
\end{equation}
This is the sense in which the augmented transport also defines a reverse direction, although the main use in this work is fixed-observation posterior sampling.

\paragraph{3. The role of $z_y$ in a learned observation residual.}
The auxiliary coordinate $z_y$ should not be interpreted as additional information about $x$ or as a replacement for posterior uncertainty in $x$. Its role is to complete the augmented variable $(x,z_y)$ and to provide the terminal coordinate paired with the observation branch. Through the inverse map, $z_y$ participates in the learned observation prediction
\begin{equation}
    \hat y_\theta(x,z_y)=\pi_yT_\theta^{-1}(x,z_y),
\end{equation}
and therefore defines the learned observation residual
\begin{equation}
    r_y^\theta(x,z_y;y_0)
    :=
    \hat y_\theta(x,z_y)-y_0
    =
    \pi_yT_\theta^{-1}(x,z_y)-y_0.
\end{equation}
For deterministic observation operators, such as masked Fourier measurements $y=F(x)=P_\Omega\mathcal F x$, this residual should reduce to the physical consistency condition $F(x)\approx y_0$ in the ideal case. In such settings, $z_y$ does not represent physical measurement uncertainty; the missing-frequency or null-space variability of $x\mid y_0$ is instead indexed by $z_x$. The role of $z_y$ is primarily to make the joint transport full-dimensional and to define the observation-side consistency residual.

\paragraph{4. Endpoint samples minimize an induced least-squares energy.}
For fixed $(z_x,y_0)$, define the learned augmented consistency energy
\begin{equation}
\begin{aligned}
\mathcal E_\theta(x,z_y;z_x,y_0)
={}&
\frac{1}{2\sigma_y^2}
\left\|
\pi_y T_\theta^{-1}(x,z_y)-y_0
\right\|^2 \\
&\quad+
\frac{1}{2\sigma_z^2}
\left\|
\pi_{z_x} T_\theta^{-1}(x,z_y)-z_x
\right\|^2 .
\end{aligned}
\label{eq:twist_learned_ls_energy}
\end{equation}
The first term is a learned observation-consistency residual, while the second term enforces consistency with the latent coordinate that indexes posterior variability. Since both terms are nonnegative, and since
\begin{equation}
    T_\theta^{-1}\!\big(T_\theta(z_x,y_0)\big)=(z_x,y_0),
\end{equation}
we have
\begin{equation}
    \mathcal E_\theta\!\big(T_\theta(z_x,y_0);z_x,y_0\big)=0.
\end{equation}
Therefore, under local invertibility of $T_\theta$,
\begin{equation}
    T_\theta(z_x,y_0)
    =
    \arg\min_{x,z_y}
    \mathcal E_\theta(x,z_y;z_x,y_0).
\label{eq:twist_endpoint_ls_minimizer}
\end{equation}
This gives the desired least-squares interpretation: each posterior sample is not obtained by explicitly solving a hand-crafted least-squares problem, but it is the endpoint of a learned transport whose inverse defines an augmented least-squares residual minimized at that endpoint.

\paragraph{5. Relation to least-squares updates.}
Equation~\eqref{eq:twist_endpoint_ls_minimizer} characterizes the endpoint of the learned flow. It does not imply that the velocity field is exactly the Euclidean gradient of the energy in \eqref{eq:twist_learned_ls_energy}. A safer interpretation is that the learned velocity provides an implicit, preconditioned update direction for this consistency geometry. If $w=(x,z_y)$, one may view the reverse-side dynamics locally as
\begin{equation}
    \frac{dw_t}{dt}
    \approx
    -M_t(w_t)\nabla_w\mathcal E_{\theta,t}(w_t;z_x,y_0),
\end{equation}
where $M_t(w_t)$ is a learned metric or preconditioner and $\mathcal E_{\theta,t}$ denotes a time-dependent implicit consistency energy. Thus, the twist velocity is not necessarily an ordinary least-squares gradient, but it can be interpreted as a learned preconditioned least-squares update direction. In this sense, the connection between $z_y$ and $y$ inside the twist map provides a learned observation-consistency constraint, while $z_x$ selects different posterior-consistent solutions.

\subsection{Toy inverse-problem details}
\label{app:toy_details}
\label{app:analytical_prior_mcmc}

This appendix provides the data-generation details and reference posterior
construction for the low-dimensional toy experiments in
Section~\ref{sec:toy_validation}. The purpose of these experiments is to isolate
posterior support preservation in a controlled setting where the prior and
likelihood are analytically specified.

\subsubsection{Data generation and analytical priors}
\label{app:toy_data_generation}

We use three two-dimensional prior distributions over
$x=(x_1,x_2)\in\mathbb{R}^2$: \texttt{two\_moons}, \texttt{ring}, and
\texttt{triangles}. These priors are chosen because they induce posterior
supports with different structures: curved disconnected branches, closed
annular support, and separated cluster-like modes.

\paragraph{Triangular mixture.}
For the \texttt{triangles} setting, the prior is a three-component Gaussian
mixture,
\begin{equation}
p_X(x)
=
\sum_{k=1}^{3}\pi_k\,\mathcal N(x;\mu_k,\Sigma_k),
\label{eq:triangle_prior}
\end{equation}
with equal mixture weights and component means arranged in a triangular
configuration. This produces a disconnected multimodal prior with three
dominant modes.

\paragraph{Ring distribution.}
For the \texttt{ring} setting, the prior is defined in polar coordinates by a
radial Gaussian law together with a uniform angular variable:
\begin{equation}
p_X(x)
=
\frac{1}{2\pi}\cdot \frac{1}{r}\,
\mathcal N(r;\mu_r,\sigma_r^2),
\qquad
r=\|x\|.
\label{eq:ring_prior}
\end{equation}
This induces a continuous density concentrated around an annulus, providing a
prior with nontrivial global support structure.

\paragraph{Two-moons distribution.}
For the \texttt{two\_moons} setting, the prior is defined through a latent branch
variable $b\in\{0,1\}$ and a curve parameter $t\in[0,\pi]$:
\begin{equation}
p_X(x)
=
\frac{1}{2}\sum_{b\in\{0,1\}}
\int_{0}^{\pi}
\mathcal N\!\left(x;m_b(t),\sigma_x^2 I\right)\frac{dt}{\pi}.
\label{eq:twomoons_prior}
\end{equation}
Here $m_0(t)$ and $m_1(t)$ parameterize the two moon-shaped arcs. Thus the prior
is a continuous density concentrated near two curved branches rather than a
finite Gaussian mixture.

\paragraph{Observation model.}
Across all toy settings, observations are generated from the scalar nonlinear
forward model
\begin{equation}
    y = h(x) + \varepsilon,
    \qquad
    h(x)=c_1x_1^2+c_2x_2,
    \qquad
    \varepsilon \sim \mathcal N(0,\sigma^2).
\label{eq:toy_obs_model_appendix}
\end{equation}
The coefficients $(c_1,c_2)$ and noise level $\sigma$ are setting-specific, but
the same observation family is used across settings. This design keeps the
inverse problem comparable while changing the conditional posterior support
through the prior over $x$.

For each setting, we first sample $x$ from the prior and then sample $y$ from the
observation model. During evaluation, fixed observations $y_0$ are selected from
the empirical observation distribution. The learned models are then evaluated by
generating samples from $p_\theta(x\mid y_0)$ and comparing them to reference
posterior samples for the same $y_0$.

\subsubsection{Reference posterior sampling with MCMC}
\label{app:toy_mcmc}

For each fixed observation $y_0$, the posterior density is available up to a
normalizing constant:
\begin{equation}
p(x\mid y_0)
\propto
p_X(x)
\exp\!\left(
-\frac{
\left(y_0-c_1x_1^2-c_2x_2\right)^2
}{2\sigma^2}
\right).
\label{eq:toy_posterior_closed_form}
\end{equation}
Because both the prior $p_X(x)$ and the observation model are specified
analytically, $\log p(x\mid y_0)$ and its gradient can be evaluated directly in
the toy setting.

We use Markov chain Monte Carlo as a reference posterior sampler targeting
\eqref{eq:toy_posterior_closed_form}. In particular, we use the
Metropolis-adjusted Langevin algorithm (MALA) \cite{roberts1996exponential},
which proposes gradient-informed moves using the analytical posterior score and
then applies a Metropolis correction. Under standard regularity conditions, the
chain has invariant distribution $p(x\mid y_0)$, and therefore provides a
principled posterior reference for these low-dimensional examples.

The MCMC samples are used only for evaluation. They are not introduced as a
competing learned generative model. After burn-in and thinning, the retained
samples are used to compute the marginal Wasserstein distances, KS statistics,
mean error, covariance error, and posterior overlays reported in the main text.

\paragraph{Scope.}
This analytical-prior and MCMC-reference construction is specific to the
low-dimensional toy experiments. In the high-dimensional image and scientific
inverse problems, exact posterior access is unavailable, so evaluation instead
relies on indirect diagnostics such as observation consistency, sample diversity,
and uncertainty structure. The toy setting is therefore used as a controlled
validation of posterior support preservation.

\subsection{Seismic data simulation and RTM construction}
\label{app:seis_details}

This appendix provides the data-generation, RTM construction, and evaluation
details for the seismic inverse problem experiments in
Section~\ref{sec:seismic}. In these experiments, the target variable \(x\) is a
subsurface velocity model, while the conditioning observation \(y\) is an
image-domain reverse-time migration (RTM) image derived from simulated seismic
shot records. Thus, the generative model is conditioned on an imaging product,
rather than directly on the raw seismic data.

\subsubsection{Reverse-time migration}

For each velocity model \(x\), we first simulate seismic shot records using
acoustic wave-equation modeling and then apply RTM to obtain the image-domain
observation \(y\). Let \(q\)
denote the source wavelet. Following the \textit{JUDI.jl} operator notation, the acoustic
modeling operator can be written as
\[
    d_{\mathrm{obs}} = P_r F(x) P_s^\top q + \epsilon ,
\]
where \(d_{\mathrm{obs}}\) denotes the observed shot records, \(F(x)\) denotes
acoustic wave-equation propagation in the subsurface velocity \(x\), \(P_s^\top\) injects the
source wavelet into the computational domain, \(P_r\) samples the wavefield at
receiver locations, and \(\epsilon\) denotes acquisition noise.

The noisy shot records are migrated using RTM with a fixed smooth background
model \(x_0\). In operator form, the RTM image is computed by applying the
adjoint of the Born modeling operator around \(x_0\) to the recorded data,
\[
    y = J(x_0,q)^\top d_{\mathrm{obs}},
\]
where \(J(x_0,q)\) denotes the linearized Born modeling operator, or Jacobian,
defined by the smoothed background model \(x_0\) and source wavelet \(q\). For the linearized modeling, we instead apply the adjoint Born operator to the data residual,
\[
    y = J(x_0,q)^\top (d_{\mathrm{obs}} - d_0),
    \qquad
    d_0 = P_r F(x_0)P_s^\top q ,
\]
where \(d_0\) is the predicted data generated in the smooth background model.
The resulting RTM image \(y\) serves as the conditioning observation for the
generative model.

\subsubsection{Data simulation details}

We construct the seismic dataset using 800 two-dimensional slices from the three-dimensional Compass dataset for training \cite{jones2012building}. Each slice covers a physical domain of \(6.4\,\mathrm{km} \times 3.2\,\mathrm{km}\) and is discretized on a
\(512 \times 256\) grid. Each slice is treated as a subsurface velocity model
\(x\). The surface is assumed to be absorbing. We use a \(15\,\mathrm{Hz}\) Ricker wavelet as the source signature \(q\). The seismic acquisition setup consists of 512 receivers and 64 sources. To simulate acquisition noise, we add uncorrelated band-limited Gaussian noise with a signal-to-noise ratio of \(12\,\mathrm{dB}\) to the shot records. For the migration, we use the fixed background model \(x_0\) obtained by first averaging the 800 velocity slices and then applying a Gaussian filter with a 10-sample bandwidth for all the velocity models.

\subsection{Additional reproducibility details}
\label{app:reproducibility}

We provide additional implementation details to support reproducibility of the
main experimental results. All learning-based experiments compare the proposed
joint twist-flow model with a direct conditional-flow baseline using the same
training pairs, preprocessing, optimizer, architecture width, and evaluation
protocol whenever possible. 

\paragraph{Software.}
The neural models are implemented in PyTorch. ODE integration uses
\texttt{torchdiffeq}. Numerical metrics and post-processing use
\texttt{NumPy}, \texttt{SciPy}, \texttt{scikit-image}, \texttt{scikit-learn},
and \texttt{Matplotlib}. The software environment uses Python 3.10.18, PyTorch
2.5.1 with CUDA 12.4, torchvision 0.20.1, and \texttt{torchdiffeq} 0.2.5.
The seismic wave-equation simulations and RTM-image generation use Devito and
JUDI.jl. All random-number generators are seeded before data generation,
training, and posterior sampling.

\paragraph{Image model architecture.}
For the CelebA image-restoration experiments, both CondFlow and joint twist-flow
use the same U-Net backbone. The backbone uses
\[
\texttt{model\_channels}=128,\quad
\texttt{channel\_mult}=[1,2,3,4],\quad
\texttt{num\_blocks}=2,
\]
with positional time embeddings and embedding dimension 128. CondFlow takes
\((x_t,y)\) as input, with 6 input channels and 3 output channels. Joint
twist-flow takes \((x_t,y_t)\) as input, with 6 input channels and 6 output
channels, corresponding to the velocity of the augmented state \((x_t,y_t)\).
The same architecture width and depth are used for both methods.

\paragraph{Image training.}
All image models are trained with AdamW using learning rate \(10^{-4}\), batch
size 32, and 300 epochs. The learning rate follows a cosine annealing schedule
with \(T_{\max}=100\) and minimum learning rate \(10^{-6}\). The flow-matching
time variable is sampled as \(t\sim\mathcal{U}[0,1]\). For each training pair
\((x,y)\), we draw \(z_x\sim\mathcal{N}(0,I)\). For joint twist-flow, we also
draw \(z_y\sim\mathcal{N}(0,I)\), corresponding to \(\sigma_y=1\), and define
the linear interpolation
\[
    x_t=(1-t)z_x+t x,\qquad
    y_t=(1-t)y+t z_y.
\]
The target velocity is
\[
    u_t^{\rm twist} = (x-z_x,\; z_y-y).
\]
For CondFlow, we use
\[
    x_t=(1-t)z_x+t x,\qquad
    u_t^{\rm cond}=x-z_x,
\]
with the fixed observation \(y\) concatenated as a conditioning input. Both
models minimize the mean-squared error between the predicted and target
velocities.

\paragraph{ODE sampling.}
At test time, posterior samples are generated by integrating the learned ODE over
\(t\in[0,1]\). For the image experiments, we use Euler integration with 100
time steps for both CondFlow and joint twist-flow. For a fixed observation,
CondFlow starts from \(z_x\), while joint twist-flow starts from
\((z_x,y)\). The final \(x\)-component is used as the generated posterior sample.
Checkpoints are saved every 50 epochs.

\subsection{Image-restoration details}
\label{app:image_extra_details}

For the image-restoration experiments, we use CelebA images as clean targets.
Images are center-cropped to \(178\times178\), resized to \(64\times64\), and
normalized to \([-1,1]\). We construct a shared split from the sorted image list:
90\% of the images are used for training and the remaining 10\% for validation.
A shared manifest of 20 held-out validation images is saved and reused across
observation operators, so that different restoration tasks use the same clean
test images but task-specific observations.

We evaluate three observation operators. For low-frequency restoration, we apply
a centered Fourier low-pass mask with keep ratio 0.18. For blur-downsampling, we
apply Gaussian blur with kernel size 21 and standard deviation 10.0, downsample
to \(16\times16\), and bilinearly upsample back to \(64\times64\). For
blur-masking, we use a center mask of size \(24\times24\), fill the masked
region with 0, and pre-blur the image before masking using the same
Gaussian blur setting.

\subsection{Compute resources}
\label{app:compute}

The experiments were run on an institutional SLURM GPU cluster using NVIDIA
A100 GPUs.  A typical
training job requested one GPU, one CPU core, 40GB memory, and a wall-time limit
of 200 hours:
\[
\texttt{--partition=gpu},\quad
\texttt{--gres=gpu:1},\quad
\texttt{--cpus-per-task=1},\quad
\texttt{--mem=40G},\quad
\texttt{--time=200:00:00}.
\]
The wall-time value is the requested upper limit rather than the exact runtime
of every job. The reported image experiments train one CondFlow model and one
joint twist-flow model for each observation operator. The seismic experiments
use the same GPU training setup, with additional Devito/JUDI wave-equation
simulation and forward-modeling post-processing for the observation-space
consistency checks. Additional preliminary runs were used for hyperparameter
selection and figure generation but used the same model families and datasets as
the reported experiments.

\subsection{Broader impacts}
\label{app:broader_impacts}

This work develops posterior-sampling methods for inverse problems. A potential
positive impact is improved uncertainty characterization in scientific and
engineering applications, where a single deterministic reconstruction can hide
ambiguity caused by incomplete or degraded observations. In seismic imaging and
subsurface monitoring, posterior ensembles may help identify regions where the
available observations weakly constrain the model.

The main risk is that generated samples may be over-interpreted as calibrated
posterior truth. Although our experiments evaluate posterior spread and
observation-space consistency, they do not establish perfect calibration in all
high-dimensional settings. Therefore, the method should not be used as the sole
basis for high-stakes scientific, environmental, or engineering decisions without
additional calibration checks, domain validation, and application-specific
uncertainty diagnostics.

\subsection{Existing assets and licenses}
\label{app:assets}

We use existing datasets and open-source software packages. Table~\ref{tab:assets_licenses}
summarizes their role and license or terms of use. We do not redistribute CelebA
images or derived image datasets; users should obtain CelebA from its original
source and follow its terms of use.

\begin{table}[h]
\centering
\small
\caption{Existing assets used in this work.}
\label{tab:assets_licenses}
\begin{tabular}{p{0.22\linewidth}p{0.36\linewidth}p{0.34\linewidth}}
\toprule
Asset & Use in this paper & License or terms of use \\
\midrule
CelebA
& Image-restoration experiments.
& Non-commercial research use only; images are not redistributed. \\
Devito
& Acoustic wave-equation simulation.
& MIT License. \\
JUDI.jl
& Seismic modeling and RTM-related simulation workflow.
& MIT License. \\
PyTorch
& Neural-network training and inference.
& BSD-style open-source license. \\
\texttt{torchdiffeq}
& ODE integration for continuous-time sampling.
& MIT License. \\
NumPy, SciPy, scikit-learn, Matplotlib
& Numerical computation, metrics, and plotting.
& Open-source scientific Python packages under permissive licenses. \\
\bottomrule
\end{tabular}
\end{table}

\end{document}